
\documentclass{article}

\usepackage{times}
\usepackage{graphicx} 

\usepackage{natbib}

\usepackage{algorithm}
\usepackage{algorithmic}
\usepackage{sidecap}
\usepackage{hyperref}

\usepackage{wrapfig,subfigure}
\usepackage{mathrsfs}
\usepackage{amsmath, amsthm, amssymb, multirow, paralist}
\usepackage{url}
\usepackage{algorithm,algorithmic}
\newtheorem{thm}{Theorem}
\newtheorem{prop}{Proposition}
\newtheorem{lemma}[thm]{Lemma}

\newtheorem{ass}[thm]{Assumption}

\def \y {\mathbf{y}}
\def \E {\mathrm{E}}

\def \g {\mathbf{g}}

\def \u {\mathbf{u}}

\def \w {\mathbf{w}}
\def \R {\mathbb{R}}

\def \A {\mathcal{A}}

\def \a {\mathbf{a}}

\def \B {\mathbf{B}}

\def \wh {\widehat{\w}}

\def \gh {\widehat{\g}}

\def \V {\mathcal{V}}

\def \P {\mathbb{P}}
\def \T {\mathcal{T}}

\def \Rt {\mathcal{R}}
\def \V {\mathcal{V}}

\usepackage{booktabs}




\usepackage[accepted]{icml2016}

\icmltitlerunning{Optimal Dynamic Regrets of Online Learning with True and Noisy Gradient}

\begin{document} 

\twocolumn[
\icmltitle{Tracking Slowly Moving Clairvoyant: \\Optimal Dynamic Regret of Online Learning with True and Noisy Gradient}

\icmlauthor{Tianbao Yang}{tianbao-yang@uiowa.edu}
\icmladdress{Department of Computer Science, The University of Iowa, Iowa City, IA 52242, USA}
\icmlauthor{Lijun Zhang}{zljzju@gmail.com}
\icmladdress{National Key Laboratory for Novel Software Technology, Nanjing University, Nanjing, China}
\icmlauthor{Rong Jin}{jinrong.jr@alibaba-inc.com}
\icmladdress{Institute of Data Science and Technologies at Alibaba Group, Seattle, USA}
\icmlauthor{Jinfeng Yi}{jinfengy@us.ibm.com}
\icmladdress{IBM Thomas J. Watson Research Center, Yorktown Heights, NY 10598, USA}

\icmlkeywords{boring formatting information, machine learning, ICML}

\vskip 0.3in
]

\begin{abstract} 
This work focuses on dynamic regret  of online convex optimization  that compares the performance of online learning to a clairvoyant who knows the sequence of loss functions in advance and hence selects the minimizer of the loss function at each step. By assuming that the clairvoyant moves slowly (i.e., the minimizers change slowly), we present several improved variation-based  upper bounds of the dynamic regret under the true and noisy gradient feedback, which are  {\it optimal} in light of the presented  lower bounds. The key to our analysis is to explore a regularity metric that measures the temporal changes in the clairvoyant's minimizers, to which we refer as {\it path variation}. Firstly, we present a general lower bound in terms of the path variation, and then show that under full information or gradient feedback we are able to achieve an optimal dynamic regret. Secondly, we present a lower bound with noisy gradient feedback and then show that we can achieve optimal dynamic regrets under a stochastic gradient feedback and two-point bandit feedback. Moreover, for a sequence of  smooth loss functions that admit a small variation in the gradients, our dynamic regret under the two-point bandit feedback matches what is achieved with full information. 
\end{abstract} 
\setlength{\belowdisplayskip}{5pt} \setlength{\belowdisplayshortskip}{1pt}
\setlength{\abovedisplayskip}{5pt} \setlength{\abovedisplayshortskip}{1pt}

\section{Introduction}\label{sec:intro}
Online convex optimization (OCO) can be deemed as a repeated game between an online player and an adversary, in which an online player iteratively chooses a decision and then her  decisions incur  (possibly different) losses by the loss functions chosen by the adversary. These loss functions are unknown to the decision maker ahead of time, and can be adversarial or even depend on the action taken by the decision maker. To formulate the problem mathematically, let $\Omega\subseteq \R^d$ denote a convex decision set (i.e., the feasible set of the decision vector), $\w_t\in\Omega$ denote the decision vector and $f_t(\cdot):\R^d\rightarrow \R$ denote the loss function at the $t$-th step, respectively. The goal of the online learner is to minimize her cumulative loss $\sum_{t=1}^Tf_t(\w_t)$. The traditional  performance metric -  the regret of the decision maker, is defined as the difference between the total cost she has incurred and that of the best fixed decision in hindsight, i.e., 
\begin{align}\label{eqn:static}
\sum_{t=1}^Tf_t(\w_t) - \min_{\w\in\Omega}\sum_{t=1}^Tf_t(\w).
\end{align}

Recently, there emerges a surge of interest~\citep{besbes-2013-optimization,DBLP:journals/corr/HallW13,DBLP:conf/aistats/JadbabaieRSS15} in the dynamic regret that compares the performance of online learning to a sequence of optimal solutions. If we denote by $\w_t^* \in \Omega$ an optimal solution of $f_t(\w)$,  the dynamic regret is defined as
\begin{align}\label{eqn:dynamic}
\sum_{t=1}^T(f_t(\w_t)-f_t(\w_t^*))=\sum_{t=1}^T( f_t(\w_t)-\min_{\w\in\Omega}f_t(\w))
\end{align}
 i.e., the performance of the online learner is compared to a clairvoyant who knows the sequence of loss functions in advance, and hence selects the minimizer $\w_t^*$ at each step. Compared to the traditional regret in~(\ref{eqn:static}) (termed as static regret), the dynamic regret is more aggressive since the performance of the clairyomant in the dynamic regret model is always better than that in the static regret model, i.e., $\sum_{t=1}^Tf_t(\w_t^*)\leq \min_{\w\in\Omega}\sum_{t=1}^Tf_t(\w)$. It was pointed out that algorithms that achieve performance close to the best fixed decision may perform poorly in terms of dynamic regret~\citep{besbes-2013-optimization}.

\begin{table*}[t]
\centering\caption{Summary of dynamic regret bounds in this paper and comparison with the previous work. N.B. means that no better bounds are explicitly given. However, the bounds in the degenerate case may apply.  N.A. means that not available.  $^*$ marks the result is restricted to a family of smooth loss functions with vanishing gradients in the feasible domain. Please note that the bandit feedback in this work refers to two-point bandit feedback, and that in \citep{besbes-2013-optimization} refers to  noisy one-point bandit feedback. }
\label{tab:data}
\begin{tabular}{ll|lll}
\toprule
&&This paper&\citep{besbes-2013-optimization}&\citep{DBLP:conf/aistats/JadbabaieRSS15}\\
 \midrule
Loss function&Feedback&path variation &functional variation& three variations\\
\midrule
Lipschitz&  Full Information &$O(V^p_T)$&$O(V^f_T)$&\hspace*{-0.7in}$O(\min(\sqrt{V^p_TV^g_T}, (V^g_T)^{1/3}T^{1/3}(V^f_T)^{1/3}))$\\
Lipschitz & True Gradient  & N.B.&N.B.&$O(\sqrt{V^g_TV^p_T})$\\ 
Smooth  & True Gradient  & $O(V^p_T)^*$&N.B.&N.B.\\ 
\midrule
Lipschitz& Stochastic Grad. & $O(\sqrt{V^p_TT})$&$O((V^f_T)^{1/3}T^{2/3})$&N.A.\\ 
Lipschitz& Bandit & $O(\sqrt{V^p_TT})$&$O((V^f_T)^{1/5}T^{4/5})$&N.A.\\
Smooth/Linear& Bandit & $O(\max\{\sqrt{V^p_TV^g_T}, V^p_T\})$&N.B.&N.A.\\
\midrule
\multicolumn{2}{c|}{Lower Bounds} &Yes& Yes&No\\
\bottomrule
\end{tabular}
\vspace*{-0.2in}
\end{table*}

Unfortunately, it is impossible to achieve a sublinear dynamic regret for {\it any} sequences of loss functions~(c.f.  Proposition~\ref{prop:2}). In order to achieve a sublinear dynamic regret, one has to impose some regularity constraints on the sequence of loss functions. In this work, we leverage a notion of variation that measures how fast the clairvoyant moves, i.e., how fast the minimizers of the sequence of loss functions change, to which we refer as {\bf path variation} in order to differentiate with other variation definitions. Formally the path variation is defined as 
\begin{align}
V^p_T \triangleq\max_{\{\w_t^*\in\Omega_t^*\}^T_{t=1}} \sum_{t=1}^{T-1} \|\w_t^* - \w_{t+1}^*\|_2
\end{align}
where $\Omega^*_t$ denotes the set of all minimizers of $f_t(\w)$ to account for the potential non-uniqueness. 
We aim to develop optimal dynamic regrets when the clairvoyant moves slowly  given  (noisy) gradient feedback (including bandit feedback) for non-strongly convex loss functions. 
The main results are summarized in Table~\ref{tab:data} and the contributions of this paper are summarized below. 
\vspace*{-0.1in}
\begin{itemize}
\item We present a general  lower bound dependent solely on $V^p_T$ and show that under true gradient feedback for smooth functions with vanishing gradients in the feasible domain, one can achieve the optimal dynamic regret of $O(V^p_T)$ comparable to that with full information feedback. 

\item We present a lower bound under a noisy (sub)gradient feedback dependent on $V^p_T$ and $T$, and then show that online gradient descent (OGD) with an appropriate step size can achieve the optimal dynamic regret of  $O(\sqrt{V^p_TT})$ under both stochastic gradient feedback and two-point bandit feedback. 

\item When the loss functions are smooth, we establish an improved  dynamic regret under the two-point bandit feedback, which could match the bound achieved with the full information feedback in a certain condition. 
\end{itemize}
\vspace*{-0.1in}
We note that a regularity metric similar to the path variation (possibly measured in different norms) has been explored in shifting regret analysis~\cite{Herbster:1998:TBE:296371.296382}  and drifting regret analysis~\cite{journals/corr/abs-1202-3323,DBLP:journals/jmlr/BuchbinderCNS12}. The regret against the shifting experts was studied in tracking the best expert, where the best sequence of minimizers are assumed to change for a constant number of times. In drifting regret analysis, the constraint is relaxed to that the path variation is small.  In fact, a similar dynamic regret bound to $\sqrt{V^p_TT}$  has been established for online convex optimization over the simplex~\cite{journals/corr/abs-1202-3323}, where the path variation is measured in $\ell_1$ norm. The present work focuses on OCO in the Euclidean space and considers noisy gradient feedback. A more general variation is considered in~\cite{DBLP:journals/corr/HallW13}, where a sequence of  (or a family of) dynamic models $\phi_1, \ldots, \phi_T$ are  revealed by the environment for the learner to predict the decision in the next step. Their variation is defined as $V^\phi_T=\sum_{t=1}^{T-1}\|\w^*_{t+1} - \phi_t(\w^*_t)\|$ for a sequence of comparators and their dynamic regret scales as $V^\phi_T\sqrt{T}$, which is worse than our bounds when $\phi_t(\w)=\w$.

There has been a different notion to measure the point-wise changes  in the sequence of loss functions that measure the changes of two consecutive functions at any feasible points. For example,  \citet{besbes-2013-optimization} considered the functional variation defined as 
\begin{align}\label{eqn:fv}
V^f_T = \sum_{t=1}^{T-1}\max_{\w\in\Omega}|f_t(\w) - f_{t+1}(\w)|.
\end{align}
Besbes et al. considered two feedback structures, i.e., the noisy gradient and the noisy cost, and established sublinear dynamic regret for both feedback structures. For  Lipschitz continuous loss functions, their results are presented in Table~\ref{tab:data} ~\footnote{For strongly convex loss functions, better bounds were also established in~\citep{besbes-2013-optimization}}. An annoying fact is that even the sequence of Lipschitz  loss functions change slowly (namely the functional variation is small), Besbes' dynamic regret is worse than $O(\sqrt{T})$, the optimal rate for static regret. In comparison, our results match that for static regret when the clairvoyant moves slowly such that the path variation is a constant. 
Another variation that measures  point-wise difference between loss functions is the gradient variation introduced in~\citep{chiang-online}, which is defined as 
\begin{align}
V_T^g\triangleq\sum_{t=1}^{T}\max_{\w\in\Omega}\|\nabla f_t(\w) - \nabla f_{t-1}(\w)\|_2^2.
\end{align}
The gradient variation has been explored for bounding the static regret~\citep{chiang-online,DBLP:conf/nips/RakhlinS13,DBLP:journals/ml/YangMJZ14}.
Recently, \citet{DBLP:conf/aistats/JadbabaieRSS15} used the three variations and developed possibly better dynamic regret than using a single variation measure for non-strongly convex loss functions. They considered the full information feedback (i.e., the whole loss function is revealed to the learner) and a true gradient feedback for a sequence of bounded functions.  Their results are also presented in Table~\ref{tab:data}. In comparison, our results could be potentially better when the clairvoyant moves slowly. Different from~\citep{DBLP:conf/aistats/JadbabaieRSS15}, we consider the noisy gradient feedback (including the bandit feedback) and develop both upper bounds and lower bounds. 

\section{Optimal Dynamic Regret with Noiseless Information}\label{sec:exam}
In this section, we present an optimal dynamic regret bound dependent solely on  $V^p_T$. We will first present a lower bound and then present optimal upper bounds in two settings: (i) the full information of the loss function is revealed at each step; (ii) only the true gradient at the decision vector is revealed for smooth  loss functions that have vanishing gradients  in the feasible domain. 

\subsection{Preliminaries and a Lower Bound}
Since it is impossible to achieve a sublinear dynamic regret for any sequence of loss functions. We consider the following family of functions that admit a path variation constraint:
\begin{align}
\V_p = \{\{f_1,\ldots, f_T\}: V^p_T\leq B_T\}
\end{align}
where $B_T$ is the budget. For a (randomized) policy $\pi$ that generates a sequence of solutions $\w_1,\ldots, \w_T$ for a sequence of loss functions $f_1,\ldots, f_T$ under the feedback structure $\phi$,  its {\bf dynamic regret}  is defined as 
\begin{align*}
\Rt^\pi_{\phi}(\{f_1,\ldots, f_T\})=\E^{\pi}\left[\sum_{t=1}^Tf_t(\w_t)\right] - \sum_{t=1}^T f_t(\w_t^*)
\end{align*}
The worst dynamic regret of $\pi$ over $f\in\V_p$ is 
\begin{align*}
\Rt^\pi_\phi(\V_p, T) = \sup_{f\in\V_p}\Rt^\pi_{\phi}(\{f_1,\ldots, f_T\})
\end{align*}
Note that the dynamic regret remains the same for different sequences of optimal solutions $\w_t^*,t=1,\ldots, T$. 

Below, we  establish a general lower bound of the dynamic regret for  the following family of policies: 
\begin{align}\label{eqn:lowerpsi}
\A=\left\{\pi: \w_t = \left\{\begin{array}{ll}\pi_1(U), & t=1\\ \pi_t(\{\phi_{\tau}(f_{\tau})\}_{\tau=1}^{t-1}, U), & t\geq 1\end{array}\right.\right\}
\end{align} 
where $\phi_t(f_t)\in\R^k$ denotes any  feedback of $f_t$, and $U\in\mathbb U$ denotes a random variable, $\pi_1:\mathbb U\rightarrow \Omega$, $\pi_t: \R^{(t-1)k}\times\mathbb U\rightarrow \Omega$ are measurable functions. 
\begin{prop}\label{prop:2}
Let $C, C_1, C_2$ be positive constants independent of $T$ and $V^p_T$, and let $\pi$ be any policy in $\A$. 
\vspace*{-0.05in}
\begin{itemize}
\item  If $B_T\geq C_1T$, then there exists a positive constant $C_2$ such that
\[
\Rt^{\pi}_\phi(\V_p,T)\geq C_2T.
\]
\item For any $\gamma\in(0,1)$, there exists a sequence of loss functions $f_1, \ldots, f_T$ and a positive constant $C$ such that $V^p_T\leq o(T)$ and 
 \[
\Rt_\phi^{\pi}(\{f_1,\ldots, f_t\}) \geq  C(V^p_T)^{\gamma}.
\]
\end{itemize}
\end{prop}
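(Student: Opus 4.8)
The plan is to prove both parts from one randomized construction and then extract a deterministic bad instance by the probabilistic method. I would work in dimension one with $\Omega=[-1,1]$ (the general case reduces to this up to problem-dependent constants) and use the $1$-Lipschitz convex losses $f_t(w)=|w-c_t|$, whose \emph{unique} minimizer is $w_t^*=c_t$ whenever $c_t\in\Omega$; then $\sum_t f_t(w_t^*)=0$ and the dynamic regret of a policy is exactly $\E^\pi[\sum_t|w_t-c_t|]$. I would fix $\delta\in(0,1]$ and an even number $m$ of blocks, split $\{1,\dots,T\}$ into $m$ consecutive blocks, and on block $i$ put $c_t=\sigma_i\delta$, where $\sigma_1,\sigma_3,\dots,\sigma_{m-1}$ are independent Rademacher variables, independent of the policy's internal randomness $U$, while the even-indexed signs are forced to alternate, $\sigma_i=-\sigma_{i-1}$.

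Next I would record the two facts the argument rests on. Since $|\sigma_i-\sigma_{i+1}|=2$ at every odd-to-even block boundary and is at most $2$ at every boundary, the realized path variation satisfies $\delta m\le V^p_T\le 2\delta m$ \emph{for every outcome}; the forced alternation is exactly what turns the otherwise random $V^p_T$ into a deterministically controlled quantity. Then, writing $t_i$ for the first step of an odd block $i$, the form of $\A$ in (\ref{eqn:lowerpsi}) makes $w_{t_i}=\pi_{t_i}(\{\phi_\tau(f_\tau)\}_{\tau<t_i},U)$ a function of $(\sigma_1,\sigma_3,\dots,\sigma_{i-2},U)$ only, because $\phi_\tau(f_\tau)$ for $\tau<t_i$ depends on $\sigma_1,\dots,\sigma_{i-1}$ alone; this is the only property of the feedback I would use, so the bound holds for every $\phi$. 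Consequently $w_{t_i}$ is independent of the fresh sign $\sigma_i$, so $\E[\,|w_{t_i}-\sigma_i\delta|\mid\sigma_1,\dots,\sigma_{i-2},U\,]=\tfrac12|w_{t_i}-\delta|+\tfrac12|w_{t_i}+\delta|\ge\delta$. Summing the nonnegative per-step losses over the $m/2$ odd blocks gives $\E^\pi_\sigma[\Rt^\pi_\phi(\{f_t\})]\ge(m/2)\delta$, hence some realization achieves dynamic regret at least $(m/2)\delta$ while still satisfying $V^p_T\le 2\delta m$.

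It then remains to specialize the parameters. For the first part I would take $m=T$ and $\delta=\min\{C_1/2,\,1\}$, so that $V^p_T\le 2\delta T\le C_1T\le B_T$ for every outcome (the whole family lies in $\V_p$) and the extracted sequence yields $\Rt^\pi_\phi(\V_p,T)\ge(T/2)\delta=:C_2T$ with $C_2=\min\{C_1/4,\,1/2\}$, a constant independent of $T$ and $V^p_T$. For the second part I would fix any $\beta\in(0,1)$, set $\delta=1$ and $m=m_T$ equal to the least even integer $\ge T^{\beta}$; the extracted sequence then has $m_T\le V^p_T\le 2m_T=\Theta(T^{\beta})$, which is genuinely growing yet $o(T)$, and dynamic regret at least $m_T/2$. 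Since $m_T\ge1$ and $\gamma<1$ imply $m_T\ge m_T^{\gamma}$, and $V^p_T\le 2m_T$ implies $(V^p_T)^{\gamma}\le 2^{\gamma}m_T^{\gamma}\le 2^{\gamma}m_T$, I get $\Rt^\pi_\phi(\{f_1,\dots,f_T\})\ge m_T/2\ge 2^{-1-\gamma}(V^p_T)^{\gamma}$, which is the claim with $C=2^{-1-\gamma}$.

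Everything quantitative reduces to the one-line inequalities $\tfrac12|a-\delta|+\tfrac12|a+\delta|\ge\delta$ and $m_T\ge m_T^{\gamma}$, and the conceptual heart is simply that a policy in $\A$ cannot let its $t_i$-th iterate depend on the not-yet-revealed sign $\sigma_i$. The step I expect to be the main obstacle is arranging a \emph{single deterministic} sequence that simultaneously realizes the regret lower bound \emph{and} has its path variation in the prescribed range: plain i.i.d.\ signs make $V^p_T$ concentrate but leave open that the high-regret outcomes are exactly the low-variation ones, and a random walk would drift toward the boundary of $\Omega$ where the ``unseen sign'' trick fails; pinning $V^p_T\in[\delta m,2\delta m]$ outcome by outcome via the forced even-block alternation is what removes both difficulties. (Keeping $\delta\le1$ so that the minimizers $\sigma_i\delta$ really lie in $\Omega$ is the only reason the first part's constant saturates at $\tfrac12$ once $C_1\ge2$.)
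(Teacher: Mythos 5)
Your proof is correct, and while it shares the paper's high-level strategy (a one-dimensional family of losses with randomly signed minimizers at $\pm\delta$, the observation that a policy in $\A$ cannot make its iterate depend on a not-yet-revealed sign, and a probabilistic-method extraction of a deterministic hard sequence), it differs in two substantive ways. First, the paper uses quadratic losses $\tfrac12(w-\varepsilon_t)^2$ with i.i.d.\ signs at \emph{every} step, which yields a per-step gap of $\sigma^2/2$ and forces the part-two tuning $\sigma=T^{-(1-\gamma)/(2-\gamma)}$ to balance $\sigma^2T$ against $(\sigma T)^\gamma$; your absolute-value losses give a per-step gap of $\delta$ (linear rather than quadratic in the scale), so you can keep $\delta=1$ and instead shrink the number of fresh signs to $\Theta(T^\beta)$, obtaining the stronger conclusion $\Rt^\pi_\phi \geq V^p_T/4$ on the extracted instance, which trivially dominates $C(V^p_T)^\gamma$. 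Second, and more importantly, the paper only controls $V^p_T$ from \emph{above} ($V^p_T\le 2\sigma T$); its extraction step $\E[\Rt-C(V^p_T)^\gamma]\ge 0$ therefore does not preclude the witnessing realization from having very small (even zero) path variation, in which case the displayed inequality holds but is uninformative for the remark's contradiction argument. Your forced alternation on even blocks pins $V^p_T\in[\delta m,2\delta m]$ on every sample path, which is exactly the missing two-sided control and is, as you correctly identify, the delicate point of the construction. The paper's version is shorter and its closed-form exponent calculus makes the $\gamma$-dependence of the constant explicit; yours is more robust and proves a slightly stronger statement. (Only cosmetic care is needed for parity of $T$ when you take $m=T$ in the first part.)
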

{\bf Remark: }  The first part indicates that it is impossible to achieve a sublinear dynamic regret if there is no constraint on  the sequence of loss functions. Therefore, in the sequel, we only consider $B_T\leq o(T)$.  A similar result to the first part using $V^f_T$ as the regularity measure has been established~\cite{besbes-2013-optimization}.  The second part is novel, which indicates that it is impossible to achieve a better dynamic regret bound of $O((V^p_T)^{\alpha})$ with $\alpha<1$. If otherwise, 
it then contradicts to the lower bound in the second part of Proposition~\ref{prop:2}. 
\begin{proof}
Fix $T\geq 1$ and $\gamma\in(0,1)$. To generate the sequence of loss functions, we create a sequence of random variables $\varepsilon_1, \ldots, \varepsilon_T$, where each $\varepsilon_t$ is sampled independently from $\{\sigma, -\sigma\}$ with equal probabilities. It is obvious that $\E[\varepsilon_t]=0$ and $\E[\varepsilon_t^2]=\sigma^2$. For each $\varepsilon_t$, we define a loss function $f_t(w)= \frac{1}{2}(w - \varepsilon_t)^2$. 
Assume $\sigma\in(0,1)$ whose value will be specified later. Let the feasible domain be $\Omega=[-1, 1]$. 
We have
\begin{align*}
\E&\left[\Rt^{\pi}_\phi(\{f_1,\ldots, f_t\})\right] = \E\left[\sum_{t=1}^T f_t(w_t) - f_t(\varepsilon_t) \right]\\
& = \sum_{t=1}^T\E\left[\frac{w_t^2 }{2}+ \frac{\varepsilon_t^2}{2} - w_t\varepsilon_t\right]  \geq \frac{\sigma^2}{2} T
\end{align*}
where $\E[\cdot]$ denotes the expectation over the randomness in the sequence of loss functions $f_1,\ldots, f_t$ and the policy $\pi$ and the last inequality is due to that $w_t$ is independent of $\varepsilon_t$. 
We also have
$V^p_T =  \sum_{t=1}^{T-1} |\varepsilon_t - \varepsilon_{t+1}|\leq 2\sigma T$. 
To prove the first part, we let $\sigma$ be a constant $C_1/2$, then any sequences of loss functions generated as above constitute a subset $\V_p'\subset \V_p$. Then 
\begin{align*}
\Rt^\pi_\phi(\V_p, T)&\geq \Rt^\pi_\phi(\V_p', T)\geq\E\left[\Rt^{\pi}_\phi(\{f_1,\ldots, f_t\})\right] \geq \frac{C_1^2}{8}T
\end{align*}
To prove the second part, we set $\sigma=T^{-\mu}$ with $\mu = (1-\gamma)/(2-\gamma)\in(0,1/2)$. 
Then there exits a positive constant $C$ such that 
$\E\left[\Rt_\phi^{\pi}(\{f_1,\ldots, f_T\}) -C(V^p_T)^{\gamma}\right] \geq 0$, 
which implies  that there exists a sequence of loss functions $f_1,\ldots, f_T$ such that $
\Rt_\phi^{\pi}(\{f_1,\ldots, f_T\}) \geq  C(V^p_T)^{\gamma}$.

We note that if $\gamma =1$, we have $\mu = 0$ and therefore $B_T=\Omega(T)$ which reduces to the lower bound in the first part. Therefore, we restrict $\gamma\in(0,1)$.
\end{proof}
An interesting  question   is that whether an $O(V^p_T)$ dynamic regret bound is achievable, if not what is the best we can achieve. In particular, we are interested in scenarios when the feedback $\phi_t(f_t)=\phi_t(\w_t, f_t)$ only gives a (noisy) gradient of $f_t(\w)$ at $\w_t$. 

Before delving into the noisy gradient feedback, we first show that an $O(V^p_T)$ upper bound is achievable  with full information of the loss functions or with full gradient feedback provided  that the loss functions are smooth and have vanishing gradients.  
We  make the following assumptions throughout the paper without explicitly mentioning it in the sequel. 
\begin{ass}\label{ass:1} For $\{f_1,\ldots, f_T\}\in\V_p$, there exists a $r>0$ such that $\sup_{\w_t^*\in\Omega_t^*}\|\w - \w_t^*\|_2\leq r$, for any $\w\in\Omega$ and $1\leq t\leq T$.
\end{ass}

\subsection{Online Learning with Full Information}
Assume that at each step the full information of the loss function $f_t(\w)$ is revealed after the decision $\w_t$ is submitted, and each loss function $f_t(\w)$ is G-Lipschitz continuous. Then we can update $\w_{t+1}$ by 
\[
\w_{t+1} = \min_{\w\in\Omega} f_t(\w), t\geq 1
\]
with $\w_1$ be any point in $\Omega$. To  analyze the dynamic regret, we denote by $\w_{0}^* = \w_1$.  
\begin{align*}
&\sum_{t=1}^Tf_t(\w_t) - \sum_{t=1}^Tf_t(\w_t^*) = \sum_{t=1}^Tf_t(\w^*_{t-1}) - \sum_{t=1}^Tf_t(\w^*_{t})\\
&\leq \sum_{t=1}^T G\|\w^*_{t-1} - \w_{t}^*\|_2 = G\|\w_1 - \w_1^*\|_2 \\
&+ G\sum_{t=1}^{T-1} \|\w^*_{t} - \w^*_{t+1}\|_2\leq Gr + GV^p_T = O(\max(V^p_T, 1)). 
\end{align*}
It is notable that a similar upper bound of $O(\max(V^f_T,1))$ with the full information can be achieved~\citep{DBLP:conf/aistats/JadbabaieRSS15}.

\subsection{Online Learning with Gradient Feedback}
Full information may not be  available. In practice, only some partial information of the $f_t(\w)$ regarding the decision vector $\w_t$ is available. In this subsection, we assume that only the gradient information $\nabla f_t(\w_t)$ is available after the decision $\w_t$ is submitted. Below, we will first present several examples showing that $O(V^p_T)$ is achievable and generalize the analysis to a broad family. 

We consider two loss functions  $g_1(w)=\max(w,0)^2$ and $g_2(w)=(w-\alpha)^2$ defined in the domain $\Omega=[-1,3]$ and divide all iterations $1,\ldots, T$ into a number $m$ of batches with each batch size of $\Delta_T$. Assume the adversary selects $g_1(\cdot)$ in odd batches and $g_2(\cdot)$ in even batches, and at each step  the full gradient feedback is available, i.e., $\phi_t(w_t, f_t) = f'_t(w_t)$. The example is similar to that presented in~\citep{besbes-2013-optimization} except that $g_1(w)$ is not strongly convex. Below, we consider two instances of the above example with different $\Delta_T$ and $\alpha$. For the updates, we adopt the OGD, i.e., 
\[
w_{t+1} = \Pi_{\Omega}[w_t - \eta f'_t(w_t)], \quad t=1,\ldots, T-1
\]
where $\Pi_{\Omega}[\cdot]$ denotes the projection into the domain $\Omega$. 

{\bf Instance 1.}   $\Delta_T = \lceil T/2\rceil$ and  $\alpha=1$. Then $V^p_T = 1$.

 Given the value of $\Delta_T$, there are two batches.  Let $\Gamma_1, \Gamma_2\subseteq T$ denote the iteration indices in the first and the second  batch, respectively, and let $\Gamma_j[1]$ denote the first iteration of the $j$-th batch. We adopt a constant step size $\eta=1/2$ with a starting point $w_0=0$.   Then $w_t = 0, t\in \Gamma_1$. For the first iteration $t\in\Gamma_2$ we have $w_t = \Pi_{\Omega}[0 - \eta g'_1(0)]=0$. And for all remaining  iterations $t\in\Gamma_2$, we have  $w_t = \Pi_{\Omega}[w_{t-1} - \eta g'_2(w_{t-1})]=\alpha$.  As a result $w_t = w_t^*, t\in[T]$ except $w_{\Gamma_2[1]}=0\neq w_{\Gamma_2[1]}^*$, which indicates that the dynamic regret is $f_{\Gamma_2[1]}(w_{\Gamma_2[1]}) -f_{\Gamma_2[1]}(w^*_{\Gamma_2[1]})=g_2(0) - g_2(1)= 1$.

{\bf Instance 2.}  $\theta=C/\sqrt{T}$, $\Delta_T = \lfloor 1+1/(2\theta)\rfloor$, $\alpha=1 + (1-2\theta)^{\Delta_T}$, and $T>4C^2$ (note that $\theta<1/2$ and $1\leq \alpha\leq 2$). Then 
\vspace*{-0.05in}
\begin{align*}
V^p_T=\sum_{j=1}^{m}&\alpha  \leq 2m = \frac{2T}{\Delta_T}=\frac{2T}{ \lfloor 1+\sqrt{T}/(2C)\rfloor}\leq 4C \sqrt{T} 
\end{align*}
 The example is similar to the above except that the loss functions change more frequently. 
 We consider OGD with a constant step size $\eta=1/2$ and $w_1=1$.   Similar as before, we use $j=1, \ldots, m$ to denote the batch index, $\Gamma_j\subseteq T$ to denote the indices in the $j$-th batch, $\Gamma_j[1]$ and $\Gamma_j[2\hspace*{-0.04in}:]$ to denote the first iteration and remaining iterations  in batch $j$, respectively. Note that $w^*_t = 0, t\in\Gamma_{2j-1}$ and $w_t^*=\alpha, t\in\Gamma_{2j}$. 

For $t\in\Gamma_1[2\hspace*{-0.04in}:]$ or $t=\Gamma_2[1]$,  by induction we can show that $w_{t} =\Pi_{\Omega}[w_{t-1} -\eta g'_1(w_{t-1})]=0$. Therefore, $w_t=w_t^*, t\in\Gamma_1[2\hspace*{-0.04in}:]$ and $w_{\Gamma_2[1]}=0$.  For $t\in\Gamma_2[2\hspace*{-0.04in}:]$ or $t=\Gamma_3[1]$, following the OGD update  $w_{t} =\Pi_{\Omega}[w_{t-1} - \eta g'_2(w_{t-1})]=\Pi_{\Omega}[w_{t-1} - 2\eta(w_{t-1}-\alpha)]=\alpha$. Therefore, $w_t=\alpha, t\in\Gamma_2[2\hspace*{-0.04in}:]$ and $t=\Gamma_3[1]$.   Following the same analysis, we have $w_t = w_t^*$ for $t\in\Gamma_j[2\hspace*{-0.04in}:]$, $w_{\Gamma_{2j-1}[1]}=\alpha$ and $w_{\Gamma_{2j}[1]}=0$. It means that the difference between the decision vector $w_t$ and the optimal solutions $w^*_t$ only happens at the first iterations of all batches. As a result, the dynamic regret is 
\begin{align*}
\sum_{j=1}^{m}&\max(g_1(\alpha) - g_1(0) , g_2(0) - g_2(\alpha)) = \sum_{j=1}^{m}\alpha^2\leq 2V^p_T 
\end{align*}
It is notable the key ingredient to achieve an $O(V^p_T)$ dynamic regret is to use a constant step size.   Next, we generalize this result to a broad family of loss functions.   In particular, we assume the sequence of  loss functions satisfy the following assumption.
\begin{ass}\label{ass:5}
Assume that every loss function $f_t(\cdot)$ is defined over $\R^d$ and is convex and smooth, i.e.,  for any $\w, \w' \in \R^d$, we have
\[
\|\nabla f_t(\w) - \nabla f_t(\w')\|_2 \leq L\|\w - \w'\|_2,
\]
where $L > 0$ is the smoothness constant. In addition, we assume that there exists $\w_t^*\in\Omega_t^*$ such that $\nabla f_t(\w_t^*)=0$.
\end{ass}
The condition  $\nabla f_t(\w_t^*)=0$ is referred to as the vanishing gradient condition. 
The examples considered before indeed satisfy {\bf Assumption~\ref{ass:5}}. Consider the policy of OGD:
\begin{equation}\label{eqn:ogd}
\begin{aligned}
\hspace*{-0.2in}\pi:\: \w_t \hspace*{-0.02in}= \hspace*{-0.02in}\left\{ \begin{array}{ll}\w_1\in\Omega&t=1\\  \Pi_{\Omega}[\w_{t-1} - \eta \nabla f_{t-1}(\w_{t-1})]&t>1\end{array}\right.
\end{aligned}
\end{equation}
 The following theorem states the dynamic regret bound of OGD with a constant step size.
\begin{thm}{(upper bound)}\label{thm:6}
Suppose {\bf Assumption~\ref{ass:5}} hold. By the policy $\pi$ in~(\ref{eqn:ogd}) with $\eta = 1/(2L)$, for any $\{f_1,\ldots, f_T\}\in\V_p$ we have
\[
\sum_{t=1}^T f_t(\w_t) - f_t(\w_t^*) \leq 2L\left(r^2 + 2rB_T\right).
\]
\end{thm}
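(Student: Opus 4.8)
The plan is to run a one-step potential argument on $D_t := \|\w_t - \w_t^*\|_2^2$, where $\w_t^*$ denotes the minimizer guaranteed by \textbf{Assumption~\ref{ass:5}} to satisfy $\nabla f_t(\w_t^*)=0$, and then telescope, paying for the motion of the comparator through $v_t := \|\w_t^* - \w_{t+1}^*\|_2$ (which obeys $\sum_{t=1}^{T-1} v_t \le V^p_T \le B_T$ by definition of the path variation).

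For the per-round inequality, note that $\w_t^*\in\Omega$, so the projection $\Pi_{\Omega}$ is nonexpansive and fixes $\w_t^*$; hence $\|\w_{t+1} - \w_t^*\|_2^2 \le \|\w_t - \eta\nabla f_t(\w_t) - \w_t^*\|_2^2 = D_t - 2\eta\langle \nabla f_t(\w_t), \w_t - \w_t^*\rangle + \eta^2\|\nabla f_t(\w_t)\|_2^2$. Convexity gives $\langle \nabla f_t(\w_t), \w_t - \w_t^*\rangle \ge f_t(\w_t) - f_t(\w_t^*)$, while smoothness together with $\nabla f_t(\w_t^*)=0$ gives the co-coercivity estimate $\|\nabla f_t(\w_t)\|_2^2 = \|\nabla f_t(\w_t) - \nabla f_t(\w_t^*)\|_2^2 \le 2L\big(f_t(\w_t) - f_t(\w_t^*)\big)$. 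Substituting $\eta = 1/(2L)$ turns the coefficient of $f_t(\w_t)-f_t(\w_t^*)$ into $-2\eta(1-\eta L) = -1/(2L)$, so $f_t(\w_t) - f_t(\w_t^*) \le 2L\big(D_t - \|\w_{t+1} - \w_t^*\|_2^2\big)$. This is precisely the constant-step-size effect already visible in Instances~1 and~2.

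It remains to relate $\|\w_{t+1}-\w_t^*\|_2^2$ to $D_{t+1}$. Expanding around $\w_{t+1}^*$, one gets $\|\w_{t+1} - \w_t^*\|_2^2 = D_{t+1} + 2\langle \w_{t+1} - \w_{t+1}^*, \w_{t+1}^* - \w_t^*\rangle + v_t^2 \ge D_{t+1} - 2\|\w_{t+1} - \w_{t+1}^*\|_2\, v_t \ge D_{t+1} - 2rv_t$, using \textbf{Assumption~\ref{ass:1}} ($\w_{t+1}\in\Omega$, hence $\|\w_{t+1}-\w_{t+1}^*\|_2\le r$) and discarding the harmless term $v_t^2\ge 0$; for $t=T$ one simply uses $\|\w_{T+1}-\w_T^*\|_2^2\ge 0$. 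Summing the per-round bound over $t=1,\dots,T$, the $D_t-D_{t+1}$ part telescopes to $D_1 - D_{T+1}\le D_1\le r^2$ (again by \textbf{Assumption~\ref{ass:1}}, since $\w_1\in\Omega$), and the drift part contributes $2r\sum_{t=1}^{T-1}v_t \le 2rB_T$, yielding $\sum_{t=1}^T\big(f_t(\w_t)-f_t(\w_t^*)\big)\le 2L(r^2+2rB_T)$.

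The argument is largely mechanical once $\eta=1/(2L)$ is fixed; the only point needing care is keeping the comparator-shift bookkeeping linear in $v_t$ (dropping $v_t^2$ and bounding $\|\w_{t+1}-\w_{t+1}^*\|_2$ by the uniform $r$ rather than anything $t$-dependent), so that the constants collapse to exactly $2L(r^2+2rB_T)$ rather than a looser multiple of $rB_T$. I do not anticipate a genuine obstacle: the conceptual role of the vanishing-gradient hypothesis is to make the gradient map a contraction toward $\w_t^*$, so that the crude smoothness estimate $f_t(\w_t)-f_t(\w_t^*)\le \tfrac{L}{2}D_t$ — which alone would only give an $O(T)$ bound — can be upgraded to the telescoping form above.
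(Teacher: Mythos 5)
Your proof is correct and follows essentially the same route as the paper: a per-step expansion of $\|\w_{t+1}-\w_t^*\|_2^2$ (the content of the paper's Lemma~\ref{lem:1}), convexity, the bound $\|\nabla f_t(\w_t)\|_2^2\le 2L(f_t(\w_t)-f_t(\w_t^*))$ from smoothness plus the vanishing-gradient condition, and a telescoping sum with a $2rv_t$ drift correction. The only cosmetic differences are that you quote co-coercivity directly where the paper derives the same inequality via an explicit step $\w_t'=\w_t-\tfrac{1}{L}\nabla f_t(\w_t)$, and you absorb the factor $(1-\eta L)$ per round rather than dividing by it after summing.
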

\vspace*{-0.1in}
To prove the theorem, we first give the following lemma whose proof is deferred to Appendix. 
\begin{lemma}~\label{lem:1}
 Let $\w_t = \Pi_{\Omega}[\w_{t-1} - \eta\g_{t-1}], t>1$. Then 
\begin{align*}
&\g_t^{\top}(\w_t - \w_t^*)\leq\frac{\eta}{2}\|\g_t\|_2^2 +\frac{r\|\w_{t}^* - \w_{t+1}^*\|_2}{\eta}\\
&+ \frac{\|\w_t - \w_t^*\|_2^2 - \|\w_{t+1} - \w_{t}^*\|_2^2-\|\w_t^* - \w_{t+1}^*\|_2^2}{2\eta}
\end{align*}
\end{lemma}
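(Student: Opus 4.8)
The claimed inequality is a mildly loosened form of the standard one-step descent inequality for projected (sub)gradient methods, so the plan has three steps: apply the contraction property of the Euclidean projection, expand a square and rearrange, and then observe that the two extra terms in the statement are nonnegative by virtue of Assumption~\ref{ass:1}. First I would note that $\w_{t+1}$ is obtained from $\w_t$ by the very same update rule, $\w_{t+1}=\Pi_\Omega[\w_t-\eta\g_t]$ (take the index in the rule of the lemma to be $t+1$, which is legitimate since $t+1>1$). Because $\Omega$ is closed and convex and $\w_t^*\in\Omega_t^*\subseteq\Omega$, the nonexpansiveness of the projection, $\|\Pi_\Omega[\z]-\u\|_2\le\|\z-\u\|_2$ for all $\u\in\Omega$, gives $\|\w_{t+1}-\w_t^*\|_2^2\le\|\w_t-\eta\g_t-\w_t^*\|_2^2$.

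The second step is pure algebra: expanding the right-hand side as $\|\w_t-\w_t^*\|_2^2-2\eta\,\g_t^\top(\w_t-\w_t^*)+\eta^2\|\g_t\|_2^2$ and solving for $\g_t^\top(\w_t-\w_t^*)$ yields $\g_t^\top(\w_t-\w_t^*)\le\frac{\eta}{2}\|\g_t\|_2^2+\frac{1}{2\eta}\big(\|\w_t-\w_t^*\|_2^2-\|\w_{t+1}-\w_t^*\|_2^2\big)$. This is exactly the asserted bound minus the quantity $\frac{r\|\w_t^*-\w_{t+1}^*\|_2}{\eta}-\frac{\|\w_t^*-\w_{t+1}^*\|_2^2}{2\eta}$, so it remains only to check that this quantity is nonnegative. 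Applying Assumption~\ref{ass:1} with the feasible point $\w=\w_{t+1}^*\in\Omega$ gives $\|\w_t^*-\w_{t+1}^*\|_2\le r$, whence $\frac{r\|\w_t^*-\w_{t+1}^*\|_2}{\eta}-\frac{\|\w_t^*-\w_{t+1}^*\|_2^2}{2\eta}=\frac{1}{2\eta}\|\w_t^*-\w_{t+1}^*\|_2\big(2r-\|\w_t^*-\w_{t+1}^*\|_2\big)\ge 0$; adding it back to the right-hand side produces the statement of the lemma.

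I do not expect a genuine obstacle here; the only points needing care are the index bookkeeping (in the lemma the vector $\g_t$ drives the step \emph{out of} $\w_t$ into $\w_{t+1}$, not the step that produced $\w_t$) and the feasibility of $\w_{t+1}^*$, which is what lets Assumption~\ref{ass:1} be invoked for it. It is worth recording why the lemma is phrased in this deliberately weakened form rather than as the tight projection inequality: when the bound is summed over $t$ in the proof of Theorem~\ref{thm:6}, a second use of Assumption~\ref{ass:1} converts $-\|\w_{t+1}-\w_t^*\|_2^2-\|\w_t^*-\w_{t+1}^*\|_2^2$ into $-\|\w_{t+1}-\w_{t+1}^*\|_2^2$ plus an $O(r\|\w_t^*-\w_{t+1}^*\|_2)$ remainder, so the squared-distance terms telescope against $\|\w_1-\w_1^*\|_2^2\le r^2$ while the remainders accumulate to $O(rB_T)$, matching the two quantities in the conclusion of Theorem~\ref{thm:6}. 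Note also that the lemma uses only convexity and closedness of $\Omega$ together with the diameter bound of Assumption~\ref{ass:1}; smoothness and the vanishing-gradient condition of Assumption~\ref{ass:5} are needed only afterwards, when this lemma is combined with them to prove Theorem~\ref{thm:6}.
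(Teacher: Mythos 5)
Your algebra is sound and it does prove the inequality exactly as printed, but it diverges from the paper's own proof in a way that matters downstream. The paper does not obtain the term $r\|\w_t^*-\w_{t+1}^*\|_2/\eta$ as gratuitous nonnegative slack added to the tight projection inequality. After the same nonexpansiveness-plus-expansion step that you perform, it re-centers the squared distance at the \emph{next} minimizer via
\[
\|\w_{t+1}-\w_t^*\|_2^2=\|\w_{t+1}-\w_{t+1}^*\|_2^2+\|\w_{t+1}^*-\w_t^*\|_2^2+2(\w_{t+1}-\w_{t+1}^*)^{\top}(\w_{t+1}^*-\w_t^*),
\]
and bounds the cross term by Cauchy--Schwarz together with Assumption~\ref{ass:1} (i.e., $\|\w_{t+1}-\w_{t+1}^*\|_2\le r$); that is the genuine origin of the $r\|\w_t^*-\w_{t+1}^*\|_2/\eta$ term. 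The inequality the appendix actually derives, and the one invoked as~(\ref{eqn:fe}) in the proof of Theorem~\ref{thm:6}, therefore carries $-\|\w_{t+1}-\w_{t+1}^*\|_2^2$ in the numerator, not $-\|\w_{t+1}-\w_t^*\|_2^2$ as in the printed statement (the index there is evidently a typo). The distinction is not cosmetic: the $\w_{t+1}^*$-centered form is what lets the squared distances telescope against $\|\w_1-\w_1^*\|_2^2\le r^2$ when summed over $t$, whereas the form you prove does not telescope, and it does not imply the centered form by adding nonnegative quantities, since $\|\w_{t+1}-\w_t^*\|_2^2-\|\w_{t+1}-\w_{t+1}^*\|_2^2$ can have either sign.

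To your credit, your closing paragraph essentially rediscovers the missing step: converting $-\|\w_{t+1}-\w_t^*\|_2^2$ into $-\|\w_{t+1}-\w_{t+1}^*\|_2^2$ plus an $O(r\|\w_t^*-\w_{t+1}^*\|_2/\eta)$ remainder at the summation stage is exactly the paper's re-centering identity, merely deferred from the lemma to the theorem. Carried out that way, Theorem~\ref{thm:6} survives with a slightly larger constant (roughly $2L(r^2+4rB_T)$ in place of $2L(r^2+2rB_T)$), because your version of the lemma already pays the $r\|\w_t^*-\w_{t+1}^*\|_2/\eta$ toll once as pure slack and must pay it a second time when re-centering. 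The cleaner route is to perform the re-centering inside the lemma, as the appendix does, so that the slack term and the re-centering cost coincide and the statement delivers the $\w_{t+1}^*$-centered quantity that the theorem needs.
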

\begin{proof}[Proof of Theorem~\ref{thm:6}]
Following Lemma~\ref{lem:1} and the convexity of $f_t(\w)$,  we have 
\begin{align}\label{eqn:fe}
&f_t(\w_t) - f_t(\w_t^*) \leq \frac{\eta}{2}\|\nabla f_t(\w_t)\|_2^2+ \frac{r\|\w_t^* - \w_{t+1}^*\|_2}{\eta}\\
&+ \frac{\|\w_t - \w_t^*\|_2^2 - \|\w_{t+1} - \w_{t+1}^*\|_2^2 - \|\w^*_{t+1} - \w^*_t\|_2^2}{2\eta}\nonumber
\end{align}
By the smoothness of $f(\w)$, for any $\w\in\R^d$
\[
f_t(\w) - f_t(\w_t) \leq \langle \nabla f_t(\w_t), \w - \w_t\rangle + \frac{L}{2}\|\w - \w_t\|_2^2
\]
Let $\w=\w'_t = \w_t - \frac{1}{L}\nabla f_t(\w_t)$ in the above inequality, we have
$f_t(\w'_t) - f_t(\w_t)\leq  - \frac{\|\nabla f_t(\w_t)\|_2^2}{2L}$. 
By convexity of $f_t(\w)$, 
\[
f_t(\w'_t)\geq f_t(\w_t^*) + \nabla f_t(\w_t^*)^{\top}(\w'_t - \w_t^*)=f_t(\w_t^*)
\]
where the equality follows the vanishing gradient condition.  Then
\[
f_t(\w^*_t) - f_t(\w_t)\leq f_t(\w'_t) - f_t(\w_t) \leq - \frac{\|\nabla f_t(\w_t)\|_2^2}{2L}
\]
Combing the inequality above with~(\ref{eqn:fe}), we have
\begin{align*}
& f_t(\w_t) - f_t(\w_t^*) \leq \eta L(f_t(\w_t) - f_t(\w_t^*))\\
&+ \frac{\|\w_t - \w_t^*\|_2^2 - \|\w_{t+1} - \w_{t+1}^*\|_2^2}{2\eta} + \frac{r\|\w_t^* - \w_{t+1}^*\|_2}{\eta}
\end{align*}
By summing over $t=1,\ldots, T$, we have
\begin{align*}
\sum_{t=1}^T&(f_t(\w_t) - f_t(\w_t^*))\leq \frac{1}{1-\eta L}\left(\frac{r^2}{2\eta} + \frac{r}{\eta}V^p_T\right)
\end{align*}
We complete the proof by choosing $\eta = 1/[2L]$.
\end{proof}
{\bf Remark: } From Theorem~\ref{thm:6}, we can see that OGD can achieve an $O(\max(V^p_T, 1))$ dynamic regret  for a sequence of loss functions in $\V_p$ that satisfy {\bf Assumption~\ref{ass:5}} with only the gradient feedback, which is comparable to that achieved in the full information feedback. The instance 1 and 2 in Section~\ref{sec:exam} has $V^p_T = O(1)$ and $V^p_T\approx 4C\sqrt{T}$, respectively. Therefore,  using OGD with $\eta=C$ we can obtain an $O(1)$ and $O(\sqrt{T})$ dynamic regret.

Finally, it is worth mentioning that the OGD with restarting proposed in Besbes et al.' work achieves an $O(T^{1/3})$ dynamic  regret for instance 1 and an $O(T^{5/6})$ dynamic regret for instance 2 due to that the functional variation for the first instance is bounded by a constant and for the second instance is bounded by $O(\sqrt{T})$. 


\section{Optimal Dynamic Regret with Noisy Gradient}
In this section, we focus on noisy gradient feedback, i.e., $\phi_t(\w_t, f_t)$ is only a noisy (sub)gradient 
of $f_t(\w)$ at $\w_t$. 

\subsection{A Lower Bound with Noisy Gradient Feedback}
Before presenting the upper bounds of the dynamic regret with noisy gradient feedback, we will first present a lower bound. For establishing the  lower bound, we consider the following class of  policies
\begin{align}\label{eqn:lowerpi}
\pi: \; \w_t = \left\{\begin{array}{ll}\pi_1(U), & t=1\\ \pi_t(\{\phi_{\tau}(\w_{\tau}, f_{\tau})\}_{\tau=1}^{t-1}, U), & t\geq 1\end{array}\right.
\end{align}
where $U, \pi_1, \pi_t$ are defined similarly as before, and $\phi_t(\w_t, f_t)$ is a noisy subgradient of $f_t$ at $\w_t$. 
In particular, we assume the noisy gradient is given by $\phi_t(\w_t, f_t)\in\partial f_t(\w_t) + \epsilon_t$  with $\epsilon_t$ satisfying the following condition. 
\begin{ass}\label{ass:noisyg}
$\epsilon_t\in\R^d, t\geq 1$ are iid random vectors with zero mean and covariance matrix $\Sigma$ with bounded entries such that $tr(\Sigma)\leq \lambda^2$. Let $P(\cdot)$ denote the cumulative function of $\epsilon_t$. There exists a constant $\tilde C$ such that for any $a\in\R^d$, $\int \log\left(\frac{d P(\y)}{d P(\y + \a)}\right)d P(\y)\leq \tilde C\|\a\|_2^2$. 
\end{ass} 
When the noise vectors are independent Gaussian random vectors with zero mean and covariance matrices with entries uniformly bounded by $\sigma^2$, the above assumption is satisfied with $\tilde C = 1/(2\sigma^2)$ and $\lambda^2=d\sigma^2$~\cite{besbes-2013-optimization}.  

\begin{thm}(lower bound)\label{thm:lower}   
For any $1\leq B_T\leq T$ and $\kappa\in(1/2,1)$, there exist $\V_p'\subset\V_p$ and  $C(\kappa)>0$ independent of $T$ and $B_T$ such that for any policy $\pi$ in~(\ref{eqn:lowerpi}) under the noisy gradient feedback that satisfies {\bf Assumption~\ref{ass:noisyg}}, we have 
\[
\Rt_\phi^{\pi}(\V'_p, T) \geq C(\kappa)B_T^{\kappa} T^{1 - \kappa}.
\]
\end{thm}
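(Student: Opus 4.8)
**The plan is to construct a hard family of loss functions indexed by sign patterns, and use an information-theoretic (Assouad / Fano-type) argument to lower bound the worst-case dynamic regret.**

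First I would build the instance. Following the structure of the noiseless lower bound (Proposition~\ref{prop:2}) and the batch construction of Besbes et al., I would divide the horizon $[T]$ into $m$ batches of equal length $\Delta = T/m$, and within batch $j$ fix a scalar loss $f_t(w) = \tfrac12(w - \delta_j)^2$ on $\Omega = [-1,1]$, where $\delta_j \in \{-\rho, \rho\}$ is a sign chosen i.i.d.\ uniformly. The noisy gradient at $w_t$ is $w_t - \delta_j + \epsilon_t$. The minimizer in batch $j$ is $w_t^* = \delta_j$, so the path variation is $V^p_T = \sum_{j=1}^{m-1}|\delta_{j+1}-\delta_j| \le 2\rho m$; to respect the budget I would set $\rho$ and $m$ so that $2\rho m \le B_T$. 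As in Proposition~\ref{prop:2}, since $w_t$ is (conditionally) independent of the \emph{sign} of $\delta_j$ only to the extent the feedback reveals it, the per-step regret is $\E[\tfrac12(w_t-\delta_j)^2] = \tfrac12\E[(w_t - \hat\delta_j)^2] + \tfrac12\mathrm{Var}$-type terms; the key point is that $\E[(w_t-\delta_j)^2] \ge \rho^2\,\P(\mathrm{sign}(w_t)\neq \mathrm{sign}(\delta_j))$ up to constants, so the regret in batch $j$ is at least $\Delta\rho^2$ times the probability the learner guesses the sign of that batch wrong.

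Second, I would bound the learner's ability to identify $\mathrm{sign}(\delta_j)$ from the noisy gradients observed \emph{during} batch $j$ (at most $\Delta$ of them, and information from earlier batches is useless since the $\delta_j$ are independent). Here {\bf Assumption~\ref{ass:noisyg}} enters: the KL divergence between the product of $\Delta$ feedback distributions under $\delta_j = \rho$ versus $\delta_j = -\rho$ is at most $\Delta \cdot \tilde C \cdot (2\rho)^2 = 4\tilde C \Delta \rho^2$. By Pinsker / Le Cam's two-point method, if $4\tilde C \Delta\rho^2 \le c_0$ for a suitable constant, the learner errs on the sign of batch $j$ with probability bounded below by a constant $p_0 > 0$. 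Summing over the $m$ batches gives $\Rt_\phi^\pi(\V_p',T) \ge c\, m \Delta \rho^2 = c\, T \rho^2$.

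Third, I would optimize the free parameters subject to the two constraints $2\rho m \le B_T$ (budget) and $4\tilde C\Delta\rho^2 = 4\tilde C (T/m)\rho^2 \le c_0$ (indistinguishability). From the second constraint $m \ge 4\tilde C T\rho^2/c_0$; substituting into the first, $\rho \lesssim B_T/(T\rho^2)$, i.e.\ $\rho^3 \lesssim B_T/T$, so $\rho \asymp (B_T/T)^{1/3}$ and the regret is $T\rho^2 \asymp T (B_T/T)^{2/3} = B_T^{2/3}T^{1/3}$, which is the $\kappa = 2/3$ case. To get the full range $\kappa \in (1/2,1)$ I would instead not saturate the indistinguishability constraint: keep $\Delta\rho^2$ a vanishing power of $T$ (so the error probability $p_0$ degrades only polynomially via $1 - \sqrt{c\,\Delta\rho^2}$, still bounded below on a constant fraction of batches — or more carefully, accept $p_0 = \Theta(1)$ only when $\Delta\rho^2 = \Theta(1)$ and track the tradeoff), and re-tune $m, \rho, \Delta$ as powers $T^a, T^{-b}$ chosen so that $2\rho m = B_T$ holds with $B_T = T^{1-\kappa}\cdot(\text{stuff})$; matching exponents yields $\Rt \ge C(\kappa) B_T^\kappa T^{1-\kappa}$. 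The bookkeeping of exponents, and in particular handling $1 \le B_T \le T$ uniformly (including when $m$ would want to exceed $T$), is the fiddly part.

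\medskip

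\textbf{The main obstacle} I anticipate is twofold. First, getting a \emph{constant} lower bound on the per-batch sign-error probability when $\Delta\rho^2$ is small: Le Cam's lemma gives error probability $\ge \tfrac12(1 - \|P_+^{\otimes\Delta} - P_-^{\otimes\Delta}\|_{TV}) \ge \tfrac12(1 - \sqrt{\tfrac12 \mathrm{KL}})$, which is only bounded below by a constant when $\mathrm{KL} = \Theta(1)$; to cover all $\kappa$ one must instead argue that when $\mathrm{KL}$ is a small power of $T$ the error is $\ge \tfrac12(1 - o(1))$, and the regret contribution $T\rho^2$ is then what varies — so the tradeoff is really driven by balancing $\rho^3$ against $B_T/T$ with $m$ as the slack variable, and I'd need to present the two regimes ($\kappa \le 2/3$ vs.\ $\kappa > 2/3$, or equivalently whether the binding constraint is the budget or the noise) carefully. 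Second, the reduction from "regret $\ge \Delta\rho^2 \cdot \P(\text{sign error})$" requires that $w_t$'s dependence on $\delta_j$ is \emph{only} through the in-batch feedback — this is where the policy class~(\ref{eqn:lowerpi}) and the independence of the $\delta_j$'s across batches must be invoked explicitly, conditioning on the signs of all other batches and arguing they don't help. Modulo these points, the argument is a standard minimax lower bound, parallel to Theorem 3 in Besbes et al.\ but with $V^p_T$ in place of $V^f_T$ (which changes the budget constraint from $\rho m \le B_T$-type to the same, but changes the exponent because the functional variation of this family scales differently than the path variation).
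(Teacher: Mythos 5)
Your overall framework is the right one and matches the paper's: batch the horizon, draw the loss in each batch from a two-element family at random, control the path variation through the budget $2\rho m\leq B_T$, bound the in-batch KL divergence of the feedback via {\bf Assumption~\ref{ass:noisyg}}, and convert a constant sign-error probability (Le Cam / the Besbes et al.\ lemmas) into a per-step regret. However, there is a genuine gap in the choice of instance. With quadratic losses $\frac{1}{2}(w-\delta_j)^2$ the three relevant scales are rigidly coupled: the gradient gap between the two hypotheses is $\Theta(\rho)$ (so the per-step KL is $\Theta(\rho^2)$), the regret for a wrong sign is $\Theta(\rho^2)$, and the per-switch path variation is $\Theta(\rho)$. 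Optimizing $\rho$ and $\Delta$ under the two constraints then gives exactly one exponent, $T\rho^2\asymp B_T^{2/3}T^{1/3}$, i.e.\ only the case $\kappa=2/3$ (and, by monotonicity in $B_T\leq T$, the cases $\kappa\geq 2/3$). Your proposed escape --- ``not saturating the indistinguishability constraint'' and re-tuning exponents --- cannot work: if $\Delta\rho^2\to 0$ the error probability improves but the regret $T\rho^2=(T/\Delta)\cdot(\Delta\rho^2)$ under the budget constraint strictly decreases, and if $\Delta\rho^2$ grows the error probability decays exponentially. So the quadratic family provably cannot reach any $\kappa<2/3$, and those are precisely the values needed to support the claim that $O(\sqrt{B_TT})$ is unimprovable.

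The missing idea is to decouple the three scales by changing the \emph{curvature} of the loss near its minimizer. The paper uses the piecewise functions in~(\ref{eqn:loss}): $\frac{1}{1+\gamma}|x\mp\delta|^{1+\gamma}$ near the minimizers $\pm\delta$, glued to linear tails of slope $\pm\delta^{\gamma}$. Then the gradient gap is $\Theta(\delta^{\gamma})$ (per-step KL $\Theta(\delta^{2\gamma})$), the suboptimality at the decision boundary $x=0$ is $\Theta(\delta^{1+\gamma})$, and the minimizer separation is $2\delta$; solving the same two constraints now yields $\Rt^{\pi}_{\phi}(\V_p',T)\gtrsim B_T^{\kappa}T^{1-\kappa}$ with $\kappa=(1+\gamma)/(1+2\gamma)$, which sweeps out all of $(1/2,1)$ as $\gamma$ ranges over $(0,\infty)$ (your quadratic instance is the special case $\gamma=1$). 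The rest of your argument --- the batch-independence reduction, the Le Cam step, and the bookkeeping with $\delta=B_T\Delta_T/(2T)$ and $\Delta_T\asymp(T/B_T)^{2\gamma/(2\gamma+1)}$ --- then goes through essentially as you describe and as in the paper.
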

{\bf Remark: } 
Note that from the proof presented below when $\kappa\rightarrow 1/2$, $C(\kappa)\rightarrow 0$. However, the above lower bound can be used to argue that it is impossible to achieve a better dynamic regret than $O(B_T^{1/2}T^{1/2})$ with the noisy gradient feedback for any sequence of loss functions. We prove this by contradiction. In particular, assume there exists an algorithm under the noisy gradient feedback achieves better bound than $O(B_T^{1/2}T^{1/2})$ for any sequence of loss functions. We can consider two lower orders $O(B_T^{\alpha}T^{1-\alpha})$ with $1> \alpha>1/2$ and $O(B_T^{\alpha}T^{\beta})$  with $\alpha\leq 1/2, \beta\leq 1/2$ and $\alpha+\beta<1$. First, we assume $O(B_T^{\alpha}T^{1-\alpha})$ is achievable. By Theorem~\ref{thm:lower} we know that there exists $\alpha<\kappa<1$ (e.g., $\kappa=\frac{\alpha+1}{2}$) and $\V_p'$ such that $\Rt_\phi^{\pi}(\V'_p, T) \geq \Omega(B_T^{\kappa} T^{1 - \kappa})\geq\Omega(B_T^{\alpha}T^{1-\alpha})$, which yields a contradiction. To show that the second lower bound is unachievable, we can construct a $B_T$ such that $B_T^{\kappa} T^{1 - \kappa}\geq\Omega(B_T^{\alpha}T^{\beta}) $, i.e., $B_T\geq \Omega(T^{\frac{\beta+\kappa-1}{\kappa-\alpha}})$, where $\beta+\kappa-1< \kappa-\alpha$. 
\begin{proof}
We construct two functions over the domain $\Omega = [-1/2, +1/2]$. They are
\begin{eqnarray}
f(x) = \left\{
\begin{array}{cc}
\frac{1}{1+\gamma}\delta^{1+\gamma}- \delta^{\gamma} x & x \in [-1/2, 0] \\
\frac{1}{1+\gamma}|x - \delta|^{1+\gamma} & x \in [0, 2\delta] \\
-\frac{1+2\gamma}{1+\gamma}\delta^{1+\gamma} + \delta^{\gamma} x & x \in [2\delta, 1/2]
\end{array}
\right., \\\quad
g(x) = \left\{
\begin{array}{cc}
-\frac{1+2\gamma}{1+\gamma}\delta^{1+\gamma}- \delta^{\gamma} x & x \in [-1/2, -2\delta] \\
\frac{1}{1+\gamma}|x + \delta|^{1+\gamma} & x \in [-2\delta, 0] \\
\frac{1}{1+\gamma}\delta^{1+\gamma} + \delta^{\gamma} x & x \in [0, 1/2]
\end{array}
\right. \label{eqn:loss}
\end{eqnarray}
where $0<\delta <1/2$ and $\gamma >0$ will be determined later. It is easy to verify that both $f(\cdot)$ and $g(\cdot)$ are convex but not strongly convex. It is also easy to see that the optimal solutions  for $f(\cdot)$ and $g(\cdot)$ are $x^*_f = \delta$ and $x^*_g = -\delta$, respectively. Hence $|x_f^* - x_g^*| = 2\delta$. For a given budget $B_T$, we will construct a subset of $\V_p$ by only considering the sequence of these two loss functions. For some $\Delta_T\in\{1,\ldots, T\}$ that, we divide the entire sequence $T$ into $m=\lceil T/\Delta_T\rceil$ batches, denoted by $\T_1, \ldots, \T_m$, each with size of $\Delta_T$ (except perhaps $\T_m$), i.e. $\T_j = \{(j - 1)\Delta_T+1, \ldots, \min(j\Delta_t, T)\}, j=1, \ldots, m$. To generate the sequence of loss functions $f_1,\ldots, f_T$, at the beginning of each batch $\T_j$, we randomly choose between the two functions $f(\cdot)$ and $g(\cdot)$ and the same loss function will be used throughout the batch. We denote by $\V'_p=\{\{f_t(\cdot), t=1, \ldots, T\}\}$ the set of a sequence of randomly sampled loss functions, and by $X_1, \ldots, X_T$ the sequence of solutions generated by any policy in~(\ref{eqn:lowerpi}). Let $\delta=B_T\Delta_T/2T$. For any $f \in \V'_p$, we have
\[
V^p_T=\sum_{j=2}^m |x^*_j - x^*_{j-1}| \leq\left(\lceil T/\Delta_T\rceil - 1\right)2\delta \leq \frac{T2\delta}{\Delta_T}=B_T
\]
Therefore, $\V'_p\subset \V_p$. We denote by $\P^\pi_f$  the  probability measure under policy $\pi$ when $f$ is the sequence of the loss functions, and by $\E^\pi_f$ the associated expectation operator. 
Set $\Delta_T=\max\left\{(\frac{4^\gamma}{4\tilde C})^{1/(2\gamma+1)}(\frac{T}{B_T})^{2\gamma/(2\gamma+1)}, 1\right\}$. Then 
\begin{align*}
\tilde C&\E^{\pi}_f\left[\sum_{t \in \T_j} (\nabla f(X_t) - \nabla g(X_t))^2 \right] \leq \tilde C\sum_{t\in\T_j}4\delta^{2\gamma}\\
&\leq  4\tilde C\Delta_T\delta^{2\gamma} \leq 4\tilde C\frac{B_T^{2\gamma}\Delta^{2\gamma+1}}{2^{2\gamma}T^{2\gamma}}\leq \max(1, \frac{4\tilde C B_T^{2\gamma}}{T^{2\gamma}4^\gamma})\\
&\leq \max(1, \frac{4\tilde C}{4^\gamma})\leq \max(1, 4\tilde C)\triangleq\beta
\end{align*}
where we use the condition that $B_T\leq T$.  Using Lemma A-1 and A-2 from~\citep{besbes-2013-optimization},  we have
\[
\max\left\{\P^\pi_{f}\{X_t > 0\}, \P^\pi_{g}\{X_t \leq 0\} \right\} \geq \frac{1}{4e^{\beta}}, \forall t
\]
Using the above result, we have
\begin{align*}
&\Rt^\pi_\phi(\V_p', T) \geq \E\left[\sum_{j=1}^{m} \sum_{t \in \T_j} f_t(X_t) - f_t(x_j^*) \right] \\
& =  \frac{1}{2}\sum_{j=1}^m\E^\pi_f\left[ \sum_{t \in \T_j} f(X_t) - f(x_f^*)\right] \\
&+\frac{1}{2}\sum_{j=1}^m \E^\pi_g\left[ \sum_{t \in \T_j}g(X_t) - g(x_g^*)\right] \\
& \geq \frac{1}{2}\sum_{j=1}^{m} \sum_{t \in \T_j} \P\{X_t \leq  0\}\left(f(0) - f(x_f^*) \right) \\
&+ \frac{1}{2}\sum_{j=1}^{m} \sum_{t \in \T_j}\P\{X_t > 0\} \left(g(0) - g(x_g^*)\right)\\
& =  \frac{\delta^{1+\gamma}}{2(1+\gamma)}\sum_{j=1}^{m}\sum_{t \in \T_j} \left(\P\{X_t \leq  0\} + \P\{X_t > 0\}\right)\\
& \geq \frac{\delta^{1+\gamma}T}{8(1+\gamma)e^{\beta}}.
\end{align*}
In the above derivations, the first expectation is taking over all randomness in $\pi$ and $f_1,\ldots, f_T$, the second inequality holds because 
\begin{align*}
&\E[f(X_t)]=\E[f(X_t)|X_t>0]\Pr(X_t>0)\\
&+\E[f(X_t)|X_t\leq 0]\Pr(X_t\leq 0)\geq f(0)\Pr(X_t\leq 0)\\
&\E[g(X_t)]=\E[g(X_t)|X_t>0]\Pr(X_t>0)\\
&+\E[g(X_t)|X_t\leq 0]\Pr(X_t\leq 0)\geq g(0)\Pr(X_t> 0)
\end{align*}
where the inequalities are due to $f(x)\geq 0, g(x)\geq 0$ and $f(x)\geq f(0)$ when $x\leq 0$ and $g(x)\geq g(0)$ when $x\geq 0$. 
To proceed, we plug in the value of $\delta$ into the lower bound of $\Rt^\pi_\phi(\V_p', T)$
\begin{align*}
&\Rt^\pi_\phi(\V_p', T)\geq \frac{T}{8e^{\beta}(1+\gamma)}\frac{B_T^{1+\gamma}\Delta_T^{1+\gamma}}{2^{1+\gamma}T^{1+\gamma}}\\
&\geq \frac{4^{\gamma(1+\gamma)/(1+2\gamma)}}{8e^{\beta}(1+\gamma)2^{1+\gamma}(4\tilde C)^{(1+\gamma)/(1+2\gamma)}}\frac{B_T^{1+\gamma - (1+\gamma)\frac{2\gamma}{1+2\gamma}}}{T^{\gamma -  (1+\gamma)\frac{2\gamma}{1+2\gamma}}}\\
&= \frac{4^{\gamma(1+\gamma)/(1+2\gamma)}}{8e^{\beta}(1+\gamma)2^{1+\gamma}(4\tilde C)^{(1+\gamma)/(1+2\gamma)}}B_T^{\frac{1+\gamma}{1+2\gamma}}T^{1-\frac{1+\gamma}{1+2\gamma}}
\end{align*}
Let $\gamma = (1-\kappa)/(2\kappa-1)$, i.e., $\kappa=(1+\gamma)/(1+2\gamma)$. Then
\begin{align*}
\Rt^\pi_\phi(\V_p', T)\geq\frac{(4^\gamma)^{\kappa}}{(1+\gamma)2^\gamma}\frac{1}{16e^{\beta}(4\tilde C)^{\kappa}}B_T^{\kappa}T^{1-\kappa}.
\end{align*}
\end{proof}
In the next two subsections, we consider two types of noisy gradient feedback, namely a stochastic subgradient feedback that is an unbiased estimation of the true subgradient and a bandit feedback that gives an unbiased  estimation of the subgradient of a smoothed function instead of the original function. We show that under the two noisy gradient feedback, we are able to achieve an optimal dynamic regret of  $O(\sqrt{V^p_TT})$. Furthermore, for smooth loss functions under the two-point bandit feedback, we establish an even better upper bound by leveraging the gradient variation in the form of $O(\max(\sqrt{V^p_TV^g_T}, V^p_T))$, which when the gradient variation is small matches the lower bound presented in Proposition~\ref{prop:2}.

\subsection{Online Learning with Bounded Stochastic Gradient Feedback}\label{sec:general}
We adopt the policy defined by OGD using the noisy gradient feedback, i.e., 
\begin{equation}\label{eqn:sgd}
\begin{aligned}
\pi:\: \w_t= \left\{ \begin{array}{ll}\w_1\in\Omega&t=1\\  \Pi_{\Omega}[\w_{t-1} - \eta \phi_{t-1}(\w_{t-1}, f_{t-1})]&t>1\end{array}\right.
\end{aligned}
\end{equation}
where $\phi_t(\w_t, f_t)\in\partial f_t(\w_t) + \epsilon_t$ is a noisy subgradient with $\epsilon_t$ satisfying {\bf Assumption~\ref{ass:noisyg}}.
The upper bound of the dynamic regret  of OGD with an appropriate step size is presented below. 
\begin{thm}(upper bound)\label{thm:3}
Suppose {\bf Assumption~\ref{ass:noisyg}} hold. Assume $\|\partial f_t(\w)\|_2\leq G$, for any $\w\in\Omega$ and $1\leq t\leq T$.
By the policy $\pi$ in~(\ref{eqn:sgd}) with $
\eta = \sqrt{\frac{r^2 + 2r B_T}{T(G^2 + \lambda^2)}}$,
we have
\[
\Rt_\phi^{\pi}(\V_p, T) \leq \sqrt{(r^2 + 2rB_T)(G^2 + \lambda^2)T}.
\]
\end{thm}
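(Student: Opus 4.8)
The plan is to rerun the argument behind Theorem~\ref{thm:6}, carrying the noise term along and bounding it only after taking expectations. Let $\mathcal{F}_{t-1}$ be the $\sigma$-field generated by the internal randomness $U$ and by $\epsilon_1,\dots,\epsilon_{t-1}$; under the policy~(\ref{eqn:sgd}) the iterate $\w_t$ is $\mathcal{F}_{t-1}$-measurable, and so is the subgradient $\s_t\in\partial f_t(\w_t)$ for which the feedback is $\g_t:=\phi_t(\w_t,f_t)=\s_t+\epsilon_t$; moreover $\epsilon_t$ is independent of $\mathcal{F}_{t-1}$ with $\E[\epsilon_t]=0$ and $\E\|\epsilon_t\|_2^2=tr(\Sigma)\le\lambda^2$. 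Since Lemma~\ref{lem:1} is a purely deterministic statement about the projected step and assumes nothing about $\g_t$, I apply it pathwise with this $\g_t$. Combining it with convexity, $f_t(\w_t)-f_t(\w_t^*)\le\s_t^\top(\w_t-\w_t^*)=\g_t^\top(\w_t-\w_t^*)-\epsilon_t^\top(\w_t-\w_t^*)$, and using $\|\w_{t+1}-\w_{t+1}^*\|_2\le r$ to pass to the telescoping form exactly as in the derivation of~(\ref{eqn:fe}), one obtains for every realization of the noise
\begin{align*}
f_t(\w_t)-f_t(\w_t^*)&\le\frac{\eta}{2}\|\g_t\|_2^2+\frac{r\|\w_t^*-\w_{t+1}^*\|_2}{\eta}-\epsilon_t^\top(\w_t-\w_t^*)\\
&\quad+\frac{\|\w_t-\w_t^*\|_2^2-\|\w_{t+1}-\w_{t+1}^*\|_2^2-\|\w_t^*-\w_{t+1}^*\|_2^2}{2\eta}.
\end{align*}

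Next I sum over $t=1,\dots,T$ (with the convention $\w_{T+1}^*=\w_T^*$). The differences $\|\w_t-\w_t^*\|_2^2-\|\w_{t+1}-\w_{t+1}^*\|_2^2$ telescope and are bounded by $\|\w_1-\w_1^*\|_2^2\le r^2$ via Assumption~\ref{ass:1}; the terms $-\|\w_t^*-\w_{t+1}^*\|_2^2$ are nonpositive and dropped; and $\sum_{t=1}^{T-1}\|\w_t^*-\w_{t+1}^*\|_2=V^p_T\le B_T$ for every $\{f_1,\dots,f_T\}\in\V_p$. This yields
\[
\sum_{t=1}^T\big(f_t(\w_t)-f_t(\w_t^*)\big)\le\frac{\eta}{2}\sum_{t=1}^T\|\g_t\|_2^2+\frac{r^2+2rB_T}{2\eta}-\sum_{t=1}^T\epsilon_t^\top(\w_t-\w_t^*).
\]

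Now I take expectations. By the tower rule and $\mathcal{F}_{t-1}$-measurability of $\w_t$ (and of the fixed $\w_t^*$), $\E[\epsilon_t^\top(\w_t-\w_t^*)]=\E[(\w_t-\w_t^*)^\top\E[\epsilon_t\mid\mathcal{F}_{t-1}]]=0$, and likewise $\E[\s_t^\top\epsilon_t]=0$, so $\E\|\g_t\|_2^2=\E\|\s_t\|_2^2+\E\|\epsilon_t\|_2^2\le G^2+\lambda^2$ using $\|\partial f_t(\cdot)\|_2\le G$. Hence $\Rt^\pi_\phi(\{f_1,\dots,f_T\})\le\frac{\eta}{2}T(G^2+\lambda^2)+\frac{r^2+2rB_T}{2\eta}$, and since this bound does not depend on the particular sequence in $\V_p$, it also bounds $\Rt^\pi_\phi(\V_p,T)$. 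Minimizing the right-hand side over $\eta>0$ gives the claimed $\eta=\sqrt{(r^2+2rB_T)/(T(G^2+\lambda^2))}$ and the value $\sqrt{(r^2+2rB_T)(G^2+\lambda^2)T}$.

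The telescoping and the scalar minimization over $\eta$ are routine and mirror Theorem~\ref{thm:6}. The only place the noisy-gradient structure really enters --- and the step needing care --- is the expectation computation: one must use that $\w_t$ is committed before $\epsilon_t$ is observed to kill the mean-zero term $\E[\epsilon_t^\top(\w_t-\w_t^*)]$ and to split $\E\|\s_t+\epsilon_t\|_2^2$ into $\E\|\s_t\|_2^2+\E\|\epsilon_t\|_2^2$, invoking Assumption~\ref{ass:noisyg} only for the zero mean and the trace bound (the KL condition there is not used here). The passage from Lemma~\ref{lem:1} to the per-step inequality above --- replacing $\|\w_{t+1}-\w_t^*\|_2^2$ by $\|\w_{t+1}-\w_{t+1}^*\|_2^2$ at the cost of the extra $r\|\w_t^*-\w_{t+1}^*\|_2/\eta$ term --- is the same maneuver already carried out to obtain~(\ref{eqn:fe}).
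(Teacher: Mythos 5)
Your proof is correct and follows essentially the same route as the paper's: apply Lemma~\ref{lem:1} with the noisy gradient, use convexity and the zero-mean/variance properties of $\epsilon_t$ (conditioning on the past so that $\w_t$ is measurable), telescope with the convention $\w_{T+1}^*=\w_T^*$, and optimize the step size. The only cosmetic difference is that you carry the noise term pathwise and kill it in expectation at the end, whereas the paper takes conditional expectations step by step; the substance is identical.
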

\begin{proof} Let $\E_t[\cdot]$ denote the expectation over the randomness in $\phi_t$ given the randomness before $t$. We abuse the notation $\w_{T+1}^*=\w_T^*$. Note that $$\E_t\left[\|\phi_t(\w_t, f_t)\|_2^2\right]\leq (G^2+\lambda^2).$$ 
Following Lemma~\ref{lem:1} and the convexity of $f_t(\w)$, we have
\begin{align*}
&\E_t\left[f_t(\w_t) - f_t(\w_t^*)\right] \leq \E_t\left[\langle\phi_t(\w_t,  f_t), \w_t - \w_t^* \rangle\right]  \\
& \leq  \frac{\|\w_t - \w_t^*\|_2^2}{2\eta} - \frac{\|\w_{t+1} - \w_{t+1}^*\|_2^2}{2\eta} - \frac{\|\w_t^* - \w_{t+1}^*\|_2^2}{2\eta} \\
&+ \frac{\eta}{2}(G^2 + \lambda^2) + \frac{r}{\eta}\| \w_t^* - \w_{t+1}^*\|_2 
\end{align*}
Hence, by summing the above inequalities over $t=1,\ldots, T$ we have
\begin{align*}
\E&\left[\sum_{t=1}^T f_t(\w_t) - f_t(\w_t^*)\right] \leq \frac{1}{2\eta}\left(r^2 + 2rV^p_T \right) + \frac{\eta}{2}G^2_\lambda T
\end{align*}
where $G_\lambda^2=G^2+\lambda^2$. 
Since the above inequality holds for any $\w_t^*\in\Omega_t^*$, we thus conclude 
\begin{align*}
\E&\left[\sum_{t=1}^T f_t(\w_t) - f_t(\w_t^*)\right]\leq \frac{1}{2\eta}(r^2 + 2rB_T)+ \frac{\eta}{2}G_\lambda^2T
\end{align*}
We complete the proof by choosing $\eta = \sqrt{\frac{r^2 + 2r B_T}{T(G^2 + \lambda^2)}}$.
\end{proof}
{\bf Remark:} From Theorem~\ref{thm:3}, we can see that OGD can achieve an $O(\sqrt{\max(V^p_T, 1)T})$ dynamic regret with a step size $\eta=C\sqrt{\max(V^p_T,1)/T}$.  Compared to OGD with restarting proposed in~\citep{besbes-2013-optimization}, our result could be better when $\sqrt{V^p_TT}\leq O((V^f_T)^{1/3}T^{2/3})$, i.e., $V^p_T\leq O((V^f_T)^{2/3}T^{1/3})$. 

\subsection{Online Learning with Bandit Feedback}
In this subsection, we analyze the dynamic regret with bandit feedback by building on previous work. Bandit feedback has been analyzed before for the static regret. In particular, using one-point bandit feedback \citet{DBLP:conf/soda/FlaxmanKM05} showed an $O(T^{3/4})$ static regret bound, while \citet{DBLP:conf/colt/AgarwalDX10} established an optimal static regret bound  of  $O(\sqrt{T})$ using two-point bandit feedback. Recently, \citet{DBLP:conf/colt/ChiangLL13} derived a  variational static regret bound in the two-point bandit setting that depends on $\sqrt{V^g_T}$ where $V^g_T$ is the gradient variation defined in Section~\ref{sec:intro}. In order to have optimal dynamic regret bounds, we also consider two-point bandit setting  and show that the previous algorithms in~\cite{DBLP:conf/colt/AgarwalDX10,DBLP:conf/colt/ChiangLL13} by adjusting the step size can achieve an $O(\sqrt{V^p_TT})$ dynamic regret for general Lipschitz continuous loss functions and an $O(\max(\sqrt{V^g_TV^p_T}, V^p_T))$ dynamic regret for smooth loss functions. Below, we present more details. The omitted proof can be found in Appendix. 

Similar to previous work, we assume that $f_t(\w)$ is $G$-Lipschitz continuous and  $R_1\mathbb B\subseteq \Omega\subseteq R_2\mathbb B$ where $\mathbb B=\{\w\in\R^d: \|\w\|_2\leq 1\}$ is the unit ball centered at $0$. Let $\u_t\in\R^d$ be a random unit vector,  $\mathbf e_i\in\R^d$ be the $i$-th canonical vector, $\w_1=0$. For Lipschitz continuous loss functions, the update is given by 
\begin{align}\label{eqn:ogdbandit}
\w_{t+1} = \Pi_{(1-\xi)\Omega}[\w_t - \hat\g_t]
\end{align}
where $\xi\in(0,1)$ and $\hat\g_t$ is computed from two-point bandit feedback
\[
\hat\g_t = \frac{d}{2\delta}[f_t(\w_t + \delta\u_t) - f_t(\w_t - \delta\u_t)]\u_t
\]
with $\delta=\xi R_1$. For any $\w_t\in(1-\xi)\Omega$ and any unit vector $\u$, $\w_t + \delta\u\in\Omega$~\cite{DBLP:conf/soda/FlaxmanKM05}.  It can be shown that $\hat\g_t$ is an unbiased stochastic gradient of the function $\hat f_t(\w) = \E_\u[f_t(\w + \delta\u)]$. Importantly, $\|\hat\g_t\|_2\leq Gd$.  The following theorem states the dynamic regret bound for the policy in~(\ref{eqn:ogdbandit}). 
\begin{thm}\label{thm:8}
Assume $f_t(\w)$ is $G$-Lipschitz continuous. By the policy in~(\ref{eqn:ogdbandit}) with $\xi=\frac{1}{T}$, $\delta=\xi R_1$,  and  $\eta = \sqrt{\frac{r^2 + 2r B_T}{TG^2d^2}}$, we have
\begin{align*}
\E&\left[\sum_{t=1}^T\frac{1}{2}(f_t(\wh^1_t)+f(\wh^2_t))\right] - \sum_{t=1}^Tf_t(\w_t^*)\\
&\leq \sqrt{(r^2+2rB_T)G^2d^2T} + G(3R_1+R_2).
\end{align*}
where $\wh^1_t=\w_t + \delta \u_t, \wh^2_t=\w_t - \delta\u_t$. 
\end{thm}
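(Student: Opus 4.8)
The plan is to reduce the bandit problem to the stochastic-gradient analysis of Theorem~\ref{thm:3} applied to the smoothed functions $\hat f_t(\w) = \E_{\u}[f_t(\w + \delta\u)]$, and then pay three small ``approximation'' prices: the gap between $\hat f_t$ and $f_t$, the gap between the shrunken feasible set $(1-\xi)\Omega$ and $\Omega$, and the gap between the evaluation points $\wh^1_t, \wh^2_t$ and the center $\w_t$. First I would recall the standard facts (from \citet{DBLP:conf/soda/FlaxmanKM05,DBLP:conf/colt/AgarwalDX10}) that $\hat\g_t$ is an unbiased estimator of $\nabla \hat f_t(\w_t)$, that $\|\hat\g_t\|_2 \le Gd$, and that $\hat f_t$ is convex with $|\hat f_t(\w) - f_t(\w)| \le G\delta$ for all $\w \in (1-\xi)\Omega$. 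This lets me treat the update \eqref{eqn:ogdbandit} as exactly the OGD recursion of \eqref{eqn:sgd} with noisy gradient $\hat\g_t$, decision set $(1-\xi)\Omega$, comparator the projection $\tilde\w_t^*$ of $\w_t^*$ onto $(1-\xi)\Omega$, and second-moment bound $G^2d^2$ in place of $G^2+\lambda^2$.

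Next I would run Lemma~\ref{lem:1} verbatim with $\g_t \leftarrow \hat\g_t$ and the comparator sequence $\tilde\w_t^*$, take expectations conditioned on the past so $\E_t[\hat\g_t] = \nabla\hat f_t(\w_t)$, invoke convexity of $\hat f_t$, and telescope over $t = 1,\dots,T$ exactly as in the proof of Theorem~\ref{thm:3}. This yields
\[
\E\left[\sum_{t=1}^T \hat f_t(\w_t) - \hat f_t(\tilde\w_t^*)\right] \le \frac{r^2 + 2r\tilde V^p_T}{2\eta} + \frac{\eta}{2}G^2d^2 T,
\]
where $\tilde V^p_T$ is the path variation of the projected comparators; since projection onto a convex set is $1$-Lipschitz, $\tilde V^p_T \le V^p_T \le B_T$, and the diameter term $r$ is unaffected (or can be replaced by $R_1 + R_2$). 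Choosing $\eta = \sqrt{(r^2 + 2rB_T)/(TG^2d^2)}$ balances the two terms into $\sqrt{(r^2+2rB_T)G^2d^2T}$.

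It then remains to convert this bound on the smoothed objective at the centers $\w_t$ into the claimed bound on $\frac12(f_t(\wh^1_t) + f_t(\wh^2_t))$ against $\w_t^*$. I would chain: (i) $\frac12(f_t(\wh^1_t)+f_t(\wh^2_t)) \le f_t(\w_t) + G\delta$ by $G$-Lipschitzness (since $\|\wh^i_t - \w_t\|_2 = \delta$); (ii) $f_t(\w_t) \le \hat f_t(\w_t) + G\delta$; (iii) $\hat f_t(\tilde\w_t^*) \le f_t(\tilde\w_t^*) + G\delta \le f_t(\w_t^*) + G\|\tilde\w_t^* - \w_t^*\|_2 + G\delta$, and $\|\tilde\w_t^* - \w_t^*\|_2 \le \xi R_2$ because $\w_t^* \in \Omega \subseteq R_2\mathbb B$ and scaling toward the origin by $\xi$ moves a point by at most $\xi\|\w_t^*\|_2 \le \xi R_2$. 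Summing these over $t$ produces an additive error of at most $T(3G\delta + G\xi R_2) = T(3G\xi R_1 + G\xi R_2)$; with $\xi = 1/T$ this collapses to $G(3R_1 + R_2)$, giving the stated inequality. The main obstacle — and the only place requiring care rather than bookkeeping — is making sure the comparator manipulation is legitimate: Lemma~\ref{lem:1} is stated for a fixed comparator sequence living in the same set the iterates are projected onto, so I must use the \emph{projected} optima $\tilde\w_t^*$ inside the telescoping argument and only at the very end translate back to the true $\w_t^*$, absorbing the $\xi R_2$ shift into the $O(1)$ additive term; one should also double-check that the ``abuse'' $\w_{T+1}^* = \w_T^*$ and the non-negative $\|\w_t^* - \w_{t+1}^*\|_2^2$ terms are handled as in Theorem~\ref{thm:3} so nothing with the wrong sign survives.
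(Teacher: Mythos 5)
Your proposal is correct and follows essentially the same route as the paper's proof: reduce to the OGD/Lemma~\ref{lem:1} telescoping analysis applied to the smoothed functions $\hat f_t$ with the bound $\|\hat\g_t\|_2\leq Gd$ and a comparator sequence living in $(1-\xi)\Omega$ (the paper uses the scaled points $(1-\xi)\w_t^*$ rather than metric projections, but both are $1$-Lipschitz images of $\w_t^*$ so the path-variation and $\xi R_2$-shift bounds are identical), then pay the three Lipschitz approximation costs $3TG\delta + TG R_2\xi = G(3R_1+R_2)$ exactly as in the paper's Lemma~\ref{lem:10}. No gaps.
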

{\bf Remark:}  The dynamic regret averaged over two decisions  with two-point bandit feedback is in  the same order of $\sqrt{\max(V^p_T,1)T}$ to that in Theorem~\ref{thm:3} with  stochastic gradient feedback.

Finally, we present an upper bound for smooth loss functions by leveraging the gradient variation, which leads to an improved  dynamic regret bound compared to Lipschitz continuous loss functions. The updates are based on the META algorithm proposed in~\citep{DBLP:conf/colt/ChiangLL13}, which is presented in Algorithm~\ref{alg:1}. It was proved to  achieve a  better static regret of $O(\sqrt{V^g_T})$ than $O(\sqrt{T})$. 
Below, we show that the same policy but with a different step size can achieve an improved dynamic regret, i.e., $O(\max(\sqrt{V^g_T\max(V^p_T,1)}, V^p_T))$ for a sequence of  smooth loss functions from the following set 
\[
\V_{p,g} = \{\{f_1,\ldots, f_T\}: V^p_T\leq B_T, V^g_T\leq S_T\}
\]
\begin{algorithm} [t]
    \caption{META algorithm}
    \label{alg:1}
    \begin{algorithmic}[1]
    \STATE Initialize solution $\w_1 = \wh_1=0$ and $\gh_0=0$
    \FOR{$t = 1, \ldots, T$}
        \STATE  Choose $i_t$ uniformly from $[d]$
        \STATE  Submit  $\wh^1_t=\wh_t + \delta \mathbf e_{i_t}$ and $\wh^2_t=\wh_t -  \delta \mathbf e_{i_t}$
        \STATE Receive the feedback $f_t(\wh^1_t)$ and $f_t(\wh^2_t)$ and let $v_{t,i_t}=\frac{1}{2\delta}(f_t(\wh^1_t) - f(\wh^2_t))$
        \STATE Compute
        \begin{align*}
        \g_t & = d(v_{t,i_t} - \gh_{t-1, i_t})\mathbf e_{i_t}, \text{ and } \\
         \gh_t &= d(v_{t,i_t} - \gh_{t-1, i_t})\mathbf e_{i_t} + \gh_{t-1}
        \end{align*}
        \STATE Update
        \begin{align*}
        \w_{t+1} &=\Pi_{(1-\xi)\Omega}[\w_t - \eta \g_t]\\
         \wh_{t+1} &=\Pi_{(1-\xi)\Omega}[\w_{t+1} - \eta \gh_t]
         \end{align*}
    \ENDFOR
    \end{algorithmic}
\end{algorithm}
The theorem below states the result. 
\begin{thm}\label{thm:last}
Assume  $\{f_1,\ldots, f_T\}\in \V_{p,g}$ and $f_t(\w)$ is $L$-smooth for any $t\geq 1$. By the policy in Algorithm~\ref{alg:1} with $\xi=\frac{1}{T}$, $\delta=\xi R_1$ and  $ \eta = \min\left(\sqrt{\frac{(2rB_T + r^2)}{8S_Td^4}}, \frac{1}{4Ld^{3/2}\sqrt{\ln T}} \right)$, we have
\begin{align*}
\E&\left[\sum_{t=1}^T\frac{1}{2}(f_t(\wh^1_t)+f(\wh^2_t))\right] - \sum_{t=1}^Tf_t(\w_t^*)\\
&\leq O\left(\max\left\{d^2\sqrt{S_T\max(B_T,1)}, d^{3/2} \max(B_T,1)\right\}\right).
\end{align*}
where $\wh^1_t=\w_t + \delta \u_t, \wh^2_t=\w_t - \delta\u_t$. 
\end{thm}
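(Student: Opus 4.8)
The plan is to peel off the bandit and smoothing layers first, then run a \emph{dynamic-comparator} version of the optimistic online gradient analysis of \citet{DBLP:conf/colt/ChiangLL13} on the smoothed losses. Write $\hat f_t(\w)=\E_{\u}[f_t(\w+\delta\u)]$ and recall that the two-point construction in Algorithm~\ref{alg:1} produces, conditionally on the history $\mathcal F_{t-1}$, an estimate with $\E[\gh_t\mid\mathcal F_{t-1}]=\nabla\hat f_t(\wh_t)$ together with an ``optimistic prediction'' $\gh_{t-1}$ of $\nabla\hat f_t(\wh_t)$ built from a single coordinate finite difference per round. Exactly as in the proof of Theorem~\ref{thm:8}, $G$-Lipschitzness gives $|\tfrac12(f_t(\wh^1_t)+f_t(\wh^2_t))-\hat f_t(\wh_t)|\le 2G\delta$ and $|\hat f_t(\w)-f_t(\w)|\le G\delta$, while projecting each $\w_t^*$ onto $(1-\xi)\Omega$ yields a comparator $\z_t^*\in(1-\xi)\Omega$ with $\|\z_t^*-\z_{t+1}^*\|_2\le\|\w_t^*-\w_{t+1}^*\|_2+2\xi R_2$ and $\hat f_t(\z_t^*)\le f_t(\w_t^*)+G(\delta+\xi R_2)$; with $\xi=1/T$, $\delta=\xi R_1$ all of these sum to an $O(G(R_1+R_2))$ term that is absorbed into the final $O(\cdot)$. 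It thus suffices to bound $\E[\sum_t\hat f_t(\wh_t)-\hat f_t(\z_t^*)]$.

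Next I would establish a dynamic optimistic-OMD inequality. Using the two update steps $\w_{t+1}=\Pi_{(1-\xi)\Omega}[\w_t-\eta\g_t]$ and $\wh_t=\Pi_{(1-\xi)\Omega}[\w_t-\eta\gh_{t-1}]$, the standard one-step bound for any fixed $\z\in(1-\xi)\Omega$ reads
\[
\eta\langle\g_t,\wh_t-\z\rangle\le\tfrac12\|\w_t-\z\|_2^2-\tfrac12\|\w_{t+1}-\z\|_2^2+\tfrac{\eta^2}{2}\|\g_t-\gh_{t-1}\|_2^2-\tfrac14\|\wh_t-\w_t\|_2^2-\tfrac14\|\w_{t+1}-\wh_t\|_2^2 .
\]
Summing this with the \emph{moving} comparator $\z=\z_t^*$ and regrouping $\sum_t(\|\w_t-\z_t^*\|_2^2-\|\w_{t+1}-\z_{t+1}^*\|_2^2)$ into $r^2+2r\sum_t\|\z_t^*-\z_{t+1}^*\|_2$ — the same telescoping device used in Lemma~\ref{lem:1} — then taking conditional expectations, using $\E[\g_t\mid\mathcal F_{t-1}]=\nabla\hat f_t(\wh_t)-\gh_{t-1}$ appropriately together with the unbiasedness of $\gh_t$ and convexity of $\hat f_t$, gives
\[
\E\!\Big[\sum_{t=1}^T\hat f_t(\wh_t)-\hat f_t(\z_t^*)\Big]\le\frac{r^2+2rV^p_T+O(\xi R_2)}{2\eta}+\frac{\eta}{2}\sum_{t=1}^T\E\|\g_t-\gh_{t-1}\|_2^2-\frac{1}{4\eta}\sum_{t=1}^T\E\big(\|\wh_t-\w_t\|_2^2+\|\w_{t+1}-\wh_t\|_2^2\big).
\]

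The remaining work is to control the variation term $\sum_t\E\|\g_t-\gh_{t-1}\|_2^2$, and this is where I would import the variance calculation of \citet{DBLP:conf/colt/ChiangLL13}: because $\g_t,\gh_t$ are built from a single coordinate per round, $\E\|\g_t-\gh_{t-1}\|_2^2$ is bounded, up to $d$-amplification, by $d^2$ times $\E\|\nabla\hat f_t(\wh_t)-\nabla\hat f_{t-1}(\wh_{t-1})\|_2^2$ plus lower-order variance; splitting the increment as $[\nabla\hat f_t-\nabla\hat f_{t-1}](\wh_t)+[\nabla\hat f_{t-1}(\wh_t)-\nabla\hat f_{t-1}(\wh_{t-1})]$ yields a genuine gradient-variation part $\le d^2\sum_t\max_\w\|\nabla\hat f_t(\w)-\nabla\hat f_{t-1}(\w)\|_2^2\le d^2(V^g_T+O(1))\le d^2(S_T+O(1))$ and an $L$-smoothness part $\le L^2 d^2\|\wh_t-\wh_{t-1}\|_2^2$. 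Summed over $t$ and bounded through the coordinate-update recursion (the step that produces the extra $d$ factor and a $\ln T$, and the delicate part of the META analysis), the smoothness part is $\le c\,L^2 d^3(\ln T)\sum_t(\|\wh_t-\w_t\|_2^2+\|\w_{t+1}-\wh_t\|_2^2)$; choosing $\eta\le 1/(4Ld^{3/2}\sqrt{\ln T})$ makes $\tfrac{\eta}{2}$ times this term no larger than the negative stability terms above, so they cancel, leaving $\E[\sum_t\hat f_t(\wh_t)-\hat f_t(\z_t^*)]\lesssim\frac{r^2+2rB_T}{\eta}+\eta\,d^4 S_T$ after using $V^p_T\le B_T$, $V^g_T\le S_T$.

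Finally I would optimize $\eta$: the unconstrained minimizer $\eta=\sqrt{(r^2+2rB_T)/(8d^4S_T)}$ gives $O(d^2\sqrt{S_T\max(B_T,1)})$; when this exceeds the cap we set $\eta=1/(4Ld^{3/2}\sqrt{\ln T})$, which makes both surviving terms $O(Ld^{3/2}\sqrt{\ln T}\max(B_T,1))$ — for the $\eta d^4S_T$ term one uses that the cap being active forces $S_T d\lesssim L^2(\ln T)(r^2+2rB_T)$ — i.e.\ $O(d^{3/2}\max(B_T,1))$ after absorbing $L,\sqrt{\ln T}$ into the constant; this is exactly the $\eta$ in the statement. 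Taking the maximum of the two cases and adding back the $O(G(R_1+R_2))$ peeling error gives the claimed bound. I expect the main obstacle to be the third step: pinning down the correct power of $d$ in $\E\|\g_t-\gh_{t-1}\|_2^2$ and bounding $\sum_t\|\wh_t-\wh_{t-1}\|_2^2$ tightly enough against the negative stability terms, since this is precisely what forces the $\Theta(1/(d^{3/2}\sqrt{\ln T}))$ cap on $\eta$ and hence produces the second branch of the final bound.
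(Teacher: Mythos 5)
Your proposal follows essentially the same route as the paper's proof: peel off the smoothing/shrinking errors (the paper's Lemma~\ref{lem:11}), reduce to an optimistic-OMD bound of the form $\sum_t S_t + \sum_t A_t - \sum_t C_t$ with the moving comparator telescoped into $r^2+2rV^p_T$ (Lemmas~\ref{lem:16}--\ref{lem:17}), bound the variance term by $4\eta d^4 V^g_T$ plus an $\eta d^3L^2\ln T$ smoothness term that is cancelled by the negative stability terms once $\eta\le 1/(4Ld^{3/2}\sqrt{\ln T})$ (Lemma~\ref{lem:18}), and then balance the two branches of $\eta$. The only cosmetic difference is that you write the variance term as $\|\g_t-\gh_{t-1}\|_2^2$ while the paper writes $\|\g_t-\g_{t-1}\|_2^2$ in its $S_t$, but both defer to the same META-algorithm lemmas of Chiang et al., so the argument is the same.
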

{\bf Remark:}  When the gradient variation is small such that the upper bound is dominated by $O(V^p_T)$, it matches the lower  bound established in Proposition~\ref{prop:2}. Finally, we note that a similar upper bound can be achieved for linear loss functions by extending the static regret analysis in~\cite{DBLP:conf/colt/ChiangLL13} to the dynamic regret similarly to the proof of Theorem~\ref{thm:last}.

\section{Conclusions}\label{sec:discuss}
In this paper, we have considered dynamic regret for online learning under true and noisy gradient feedback. We have developed several lower and upper bounds of the dynamic regret  based on the path variation that measures the temporal changes in the optimal solutions. In light of the presented lower bounds, the achieved upper bounds are optimal for non-strongly convex loss functions when the clairvoyant moves slowly. An interesting  question that remains open is that what is the optimal dynamic regret bound for strongly convex loss functions in terms of the path variation. 

\section*{Acknowledgements}
The authors would like to thank the anonymous reviewers for their helpful  comments. T. Yang was supported in part by NSF (1463988, 1545995). 

\bibliography{all}
\bibliographystyle{icml2015}

\appendix
\section{Proof of Lemma~\ref{lem:1}}
Let $\w_t' = \w_t  - \eta \g_t$. Thus $\w_{t+1} = \Pi_{\Omega}[\w_t']$. 
\begin{align*}
&\frac{1}{2}\|\w_{t+1} - \w_t^*\|_2^2\leq \frac{1}{2}\|\w_t' - \w_t^*\|_2^2 = \frac{1}{2}\|\w_t - \eta \g_t  - \w_t^*\|_2^2 \\
&=\frac{1}{2}\|\w_t - \w_t^*\|_2^2 - \eta \g_t^{\top}(\w_t - \w_t^*) + \frac{1}{2}\eta^2\|\g_t\|_2^2
\end{align*}
Then 
\begin{align*}
&\g_t^{\top}(\w_t - \w_t^*)\leq \frac{1}{2}\|\w_t - \w_t^*\|_2^2 - \frac{1}{2}\|\w_{t+1} - \w_t^*\|_2^2 \\
& + \frac{1}{2}\eta^2\|\g_t\|_2^2\\
&=\frac{1}{2\eta}\|\w_t - \w_t^*\|_2^2 - \frac{1}{2\eta}\|\w_{t+1} -\w_{t+1}^* + \w_{t+1}^* - \w_t^*\|_2^2 \\
&  + \frac{1}{2}\eta\|\g_t\|_2^2\\
&=\frac{1}{2\eta}\|\w_t - \w_t^*\|_2^2 - \frac{1}{2\eta}\|\w_{t+1} -\w_{t+1}^*\|_2^2\\
& - \frac{1}{2\eta} \|\w_{t+1}^* - \w_t^*\|_2^2 + \frac{1}{\eta}(\w_{t+1}^* -\w_{t+1})^{\top}(\w_{t}^* - \w_{t+1}^*)\\
&   + \frac{1}{2}\eta\|\g_t\|_2^2\\
&\leq\frac{1}{2\eta}\|\w_t - \w_t^*\|_2^2 - \frac{1}{2\eta}\|\w_{t+1} -\w_{t+1}^*\|_2^2\\
& - \frac{1}{2\eta} \|\w_{t+1}^* - \w_t^*\|_2^2 + \frac{1}{\eta}r\|\w_{t}^* - \w_{t+1}^*\|_2   + \frac{1}{2}\eta\|\g_t\|_2^2
\end{align*}

\section{Proof of Theorem~\ref{thm:8}}
Define $\hat f_t(\w)$ as 
\begin{align*}
\hat f_t(\w) = \E_{\u}[f_t(\w + \delta \u)]
\end{align*}
where $\u$ is a random unit vector. 
We first give the following lemma. 
\begin{lemma}\label{lem:10}
Let $\wh_t^* = (1-\xi)\w_t^*$. 
\begin{align*}
&\sum_{t=1}^T\frac{1}{2}(f_t(\wh_t^1)+f_t(\wh_t^2)) - \sum_{t=1}^Tf_t(\w_t^*)\\
&\leq \sum_{t=1}^T \hat f_t(\w_t) - \sum_{t=1}^T\hat f_t(\wh_t^*) + 3TG\delta + TGR_2\xi
\end{align*}
\end{lemma}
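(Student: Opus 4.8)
The plan is to prove Lemma~\ref{lem:10} by carefully accounting for the three sources of discrepancy that separate the ``true'' objective $\sum_t f_t(\w_t^*)$ from the smoothed objective $\sum_t \hat f_t(\wh_t^*)$ that the algorithm actually controls: (i) evaluating $f_t$ at the perturbed points $\wh_t^1=\w_t+\delta\u_t$ and $\wh_t^2=\w_t-\delta\u_t$ rather than at $\w_t$ itself; (ii) replacing $f_t$ by its ball-smoothed version $\hat f_t$; and (iii) shrinking the comparator from $\w_t^*$ to $\wh_t^* = (1-\xi)\w_t^*\in(1-\xi)\Omega$ so that it lies in the feasible set used by the projected update in~(\ref{eqn:ogdbandit}).

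First I would handle (i): by $G$-Lipschitz continuity, $\tfrac12(f_t(\wh_t^1)+f_t(\wh_t^2)) \le f_t(\w_t) + G\delta$, since both $\wh_t^1$ and $\wh_t^2$ are within distance $\delta$ of $\w_t$. Summing over $t$ contributes at most $TG\delta$. Next, for (ii) I would relate $f_t(\w_t)$ to $\hat f_t(\w_t)$: again by Lipschitzness, $|\hat f_t(\w_t) - f_t(\w_t)| = |\E_\u[f_t(\w_t+\delta\u)-f_t(\w_t)]| \le G\delta$, contributing another $TG\delta$. Then I would pass from $\hat f_t(\wh_t^*)$ back down to $f_t(\w_t^*)$ in two sub-steps: $\hat f_t(\wh_t^*) \le f_t(\wh_t^*) + G\delta$ (smoothing), and $f_t(\wh_t^*) \le f_t(\w_t^*) + G\|\wh_t^* - \w_t^*\|_2 = f_t(\w_t^*) + G\xi\|\w_t^*\|_2 \le f_t(\w_t^*) + G R_2\xi$, using $\wh_t^* - \w_t^* = -\xi\w_t^*$ and $\Omega\subseteq R_2\mathbb B$. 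Adding up the three $G\delta$ terms gives $3TG\delta$ and the comparator-shrinkage term gives $TGR_2\xi$, which matches the claimed bound. Rearranging, $\sum_t \tfrac12(f_t(\wh_t^1)+f_t(\wh_t^2)) - \sum_t f_t(\w_t^*) \le \sum_t \hat f_t(\w_t) - \sum_t \hat f_t(\wh_t^*) + 3TG\delta + TGR_2\xi$.

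The only genuinely delicate point is bookkeeping the direction of each inequality so that all error terms land on the correct side with a positive sign; none of the individual estimates is hard, since each is a one-line application of $G$-Lipschitzness plus the geometric facts $R_1\mathbb B\subseteq\Omega\subseteq R_2\mathbb B$ and $\delta=\xi R_1$. I would then complete the proof of Theorem~\ref{thm:8} by applying the standard two-point bandit gradient estimator analysis: $\hat\g_t$ is an unbiased estimate of $\nabla\hat f_t(\w_t)$ with $\|\hat\g_t\|_2\le Gd$, so running OGD on the (convex) smoothed losses $\hat f_t$ over the set $(1-\xi)\Omega$ and invoking Lemma~\ref{lem:1} (exactly as in the proof of Theorem~\ref{thm:3}, with $G$ replaced by $Gd$ and $\lambda=0$) yields $\E[\sum_t \hat f_t(\w_t) - \hat f_t(\wh_t^*)] \le \sqrt{(r^2+2rB_T)G^2d^2 T}$ for the stated $\eta$. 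Here I would note that the path variation of the shrunken comparators $\wh_t^*$ is at most that of $\w_t^*$ (since $(1-\xi)$-scaling is a contraction), so the budget $B_T$ still applies, and that $\sup_{\w\in(1-\xi)\Omega}\|\w - \wh_t^*\|_2 \le r$ still holds. Plugging $\delta=\xi R_1$ and $\xi=1/T$ into the Lemma~\ref{lem:10} error terms gives $3TG\delta + TGR_2\xi = 3GR_1 + GR_2 \le G(3R_1+R_2)$, which combines with the OGD bound to give the theorem.
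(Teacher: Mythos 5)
Your proof is correct and follows essentially the same route as the paper's: both decompose the gap into three $G\delta$ terms (two-point perturbation, smoothing at $\w_t$, smoothing at $\wh_t^*$) and one $GR_2\xi$ term from shrinking the comparator, each obtained by a one-line application of $G$-Lipschitzness and $\Omega\subseteq R_2\mathbb B$. The additional remarks on completing Theorem~\ref{thm:8} also match the paper's argument.
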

Note that the updating rule is OGD with a noisy gradient applied to a sequence of functions $\hat f_t(\w), t=1,\ldots, T$. Following~\cite{DBLP:conf/colt/AgarwalDX10}, $\hat\g_t$ is bounded by $\|\hat \g_t\|_2\leq Gd$. Then following the proof of Theorem 6, we have,
\begin{align*}
 &\sum_{t=1}^T \hat f_t(\w_t) - \sum_{t=1}^T\hat f_t(\wh_t^*) \leq \frac{\|\w_1 - \wh_1^*\|_2^2}{2\eta} \\
 &+ \sum_{t=1}^T\frac{1}{\eta}\|\w_{t+1} -\wh_{t+1}^*\|_2\|\wh_t^* - \wh^*_{t+1}\|_2 + \frac{\eta}{2}G^2d^2T
\end{align*}
Since $\w_t,\wh_t^*\in(1-\xi)\Omega$, we have
\begin{align*}
\|\w_t - \wh_t^*\|_2\leq r
\end{align*}
due to Assumption~\ref{ass:1}.  In addition, 
\begin{align*}
&\|\wh^*_t - \wh^*_{t+1}\|_2 = \|(1-\xi)\w_t^* - (1-\xi)\w^*_{t+1}\|_2\\
&\leq (1-\xi)\|\w_t^* - \w^*_{t+1}\|_2\leq \|\w_t^* - \w^*_{t+1}\|_2
\end{align*}

Then
\begin{align*}
 &\sum_{t=1}^T \hat f_t(\w_t) - \sum_{t=1}^T\hat f_t(\wh_t^*) \leq \frac{1}{2\eta}(r^2 +2rV_T^p)   + \frac{\eta}{2}G^2d^2T\\&\leq  \frac{1}{2\eta}(r^2 +2rB_T)   + \frac{\eta}{2}G^2d^2T
\end{align*}
By plugging the value of  $\eta$, we have
\begin{align*}
 \sum_{t=1}^T \hat f_t(\w_t) - \sum_{t=1}^T\hat f_t(\wh_t^*) \leq \sqrt{(r^2 +2rB_T)G^2d^2T}
\end{align*}
Then combining the above inequality with Lemma 13, we have

\begin{align*}
&\sum_{t=1}^T\frac{1}{2}(f_t(\wh_t^1)+f_t(\wh_t^2)) - \sum_{t=1}^Tf_t(\w_t^*)\\
&\leq \sqrt{(r^2 +2rB_T)G^2d^2T} + 3TG\delta + TGR_2\xi\\
&\leq  \sqrt{(r^2 +2rB_T)G^2d^2T}+ G(3R_1 +R_2 ).
\end{align*}

\subsection{Proof of Lemma~\ref{lem:10}}
The proof is almost identical to that of Lemma 2 in~\cite{DBLP:conf/colt/AgarwalDX10}. 
By the Lipschitz property of $f_t(\w)$, we have
\begin{align*}
f_t(\wh_t^1) = f_t(\w_t + \delta\u_t)\leq f_t(\w_t)  + G\delta\|\u_t\|_2\\
f_t(\wh_t^2) = f_t(\w_t - \delta\u_t)\leq f_t(\w_t)  + G\delta\|\u_t\|_2
\end{align*}
Since $\|\u_t\|_2=1$, thus
\begin{align*}
\frac{1}{2}(f_t(\wh_t^1) + f_t(\wh_t^2))\leq f_t(\w_t) + G\delta
\end{align*}
By the Lipschitz property and $\Omega\subset R_2\B$, we have for any $\w\in\Omega$
\[
f_t((1-\xi)\w)\leq f_t(\w) + GR_2\xi
\]
Further for any $\w\in(1-\xi)\Omega$, 
\begin{align*}
|f_t(\w) - \hat f_t(\w)|& = |f_t(\w) - \E_tf_t(\w + \delta\u)|\\
&\leq \E_t|f_t(\w) - f_t(\w+ \delta\u)|\leq G\delta
\end{align*}
Then 
\[
f_t(\w_t)\leq \hat f_t(\w_t) + G\delta, \quad \text{ and } \hat f_t((1-\xi)\w_t^*)\leq f_t((1-\xi)\w_t^*) + G\delta
\]
Combining the above inequalities, we get 
\begin{align*}
&\frac{1}{2}(f_t(\wh_t^1) + f_t(\wh_t^2)) + \hat f_t((1-\xi)\w_t^*)\\
&\leq  f_t(\w_t) + G\delta + f_t((1-\xi)\w_t^*) + G\delta\\
&\leq f_t(\w_t) + f_t(\w_t^*) + GR_2\xi +  2G\delta\\
&\leq \hat f_t(\w_t) + f_t(\w_t^*)  + GR_2\xi + 3G\delta
\end{align*}
As a result, 
\begin{align*}
&\sum_{t=1}^T\frac{1}{2}(f_t(\wh_t^1)+f_t(\wh_t^2)) - \sum_{t=1}^Tf_t(\w_t^*)\\
&\leq \sum_{t=1}^T \hat f_t(\w_t) - \sum_{t=1}^T\hat f_t((1-\xi)\w_t^*) + 3TG\delta + TGR_2\xi
\end{align*}

\section{Proof of Theorem~\ref{thm:last}}
The proof follows similarly to the analysis in~\cite{DBLP:conf/colt/ChiangLL13}. We present a series of Lemmas with some of the Lemmas' proof omitted due to that they are identical to that in~\cite{DBLP:conf/colt/ChiangLL13}. To simply the presentation, we denote by $O(1)$ any constant independent of $T$. 
\begin{lemma}\label{lem:11}
\begin{align*}
&\sum_{i=1}^T\frac{1}{2}(f_t(\wh_t^1) + f_t(\wh^2_t)) - \sum_{t=1}^T f_t(\w_t^*)\\
&\leq \sum_{t=1}^Tf_t(\w_t) - \sum_{t=1}^Tf_t((1-\xi)\w_t^*) + O(1)
\end{align*}
\end{lemma}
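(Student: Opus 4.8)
The plan is to treat Lemma~\ref{lem:11} as a pure \emph{reduction} lemma and prove it by two back-to-back applications of the $G$-Lipschitz property, exactly in the spirit of the opening steps of the proof of Lemma~\ref{lem:10}, except that here I terminate the reduction at the original loss $f_t$ rather than converting to the smoothed surrogate $\hat f_t$. The reason for not smoothing is that the META algorithm (Algorithm~\ref{alg:1}) forms finite-difference estimates of $\nabla f_t$ directly at the central iterate, so the downstream gradient-variation analysis compares against $f_t$ itself; thus the only job of this lemma is to trade the two \emph{queried} points for a single central point on the left, and to shrink the comparator $\w_t^*$ to the interior point $(1-\xi)\w_t^*$ on the right, absorbing both discrepancies into an $O(1)$ term.

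First I would bound the averaged query cost by the cost at the central iterate. Since $\wh_t^1=\w_t+\delta\u_t$ and $\wh_t^2=\w_t-\delta\u_t$ with $\|\u_t\|_2=1$, $G$-Lipschitzness gives $f_t(\wh_t^1)\le f_t(\w_t)+G\delta$ and $f_t(\wh_t^2)\le f_t(\w_t)+G\delta$, hence
\[
\tfrac12\bigl(f_t(\wh_t^1)+f_t(\wh_t^2)\bigr)\le f_t(\w_t)+G\delta.
\]
Summing over $t=1,\dots,T$ contributes an error of $TG\delta$, which under the prescribed $\delta=\xi R_1=R_1/T$ equals $GR_1=O(1)$. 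The perturbed points lie in $\Omega$, so the left-hand side is well defined: this is the feasibility fact recorded right after~(\ref{eqn:ogdbandit}), since $\w_t\in(1-\xi)\Omega$ and $R_1\mathbb B\subseteq\Omega$.

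Second I would shrink the comparator. For each $t$, $\|\w_t^*-(1-\xi)\w_t^*\|_2=\xi\|\w_t^*\|_2\le\xi R_2$ because $\w_t^*\in\Omega\subseteq R_2\mathbb B$, so Lipschitzness yields $f_t((1-\xi)\w_t^*)-f_t(\w_t^*)\le G\xi R_2$, i.e.\ $-f_t(\w_t^*)\le -f_t((1-\xi)\w_t^*)+G\xi R_2$. Summing over $t$ gives an error of $TG\xi R_2=GR_2=O(1)$ under $\xi=1/T$. Adding this to the summed inequality from the first step produces exactly the claimed bound, with the two correction terms combining into the additive constant $G(R_1+R_2)=O(1)$.

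I do not expect a genuine obstacle: the whole content is two uses of Lipschitz continuity together with the observation that both correction terms are $\Theta(T\cdot T^{-1})=\Theta(1)$ under the prescribed $\xi=1/T$ and $\delta=\xi R_1$. The only point requiring care is bookkeeping of the perturbation center, namely that the queried points are perturbations of the same iterate $\w_t$ that appears on the right-hand side (matching the convention of Theorem~\ref{thm:last} and Lemma~\ref{lem:10}), and confirming the feasibility condition so that $f_t$ is evaluated only at points of $\Omega$. Everything heavier---turning $\sum_t f_t(\w_t)-\sum_t f_t((1-\xi)\w_t^*)$ into the stated $O(\max\{\cdot\})$ bound via smoothness and the gradient variation---is deferred to the subsequent lemmas, so this reduction is deliberately lightweight.
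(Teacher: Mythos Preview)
Your proposal is correct and follows essentially the same approach as the paper's proof: two Lipschitz bounds (one for the perturbed queries versus the central iterate, one for the shrunk comparator), summed and absorbed into $O(1)$ via $\xi=1/T$, $\delta=\xi R_1$. The only cosmetic difference is that for the comparator step the paper first invokes convexity, $f_t((1-\xi)\w_t^*)\le\xi f_t(0)+(1-\xi)f_t(\w_t^*)$, and then bounds $f_t(0)-f_t(\w_t^*)$ by Lipschitz, whereas you apply Lipschitz directly to $\|\w_t^*-(1-\xi)\w_t^*\|_2$; both routes yield the same $G\xi R_2$ per step and the same final constant $G(R_1+R_2)$.
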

The proof of this lemma is presented later. 

\begin{lemma}
Let $\wh_t^*=(1-\xi) \w_t^*$. 
\[
\sum_{t=1}^Tf_t(\w_t) - \sum_{t=1}^Tf_t(\wh_t^*)\leq \sum_{t=1}^T\nabla f_t(\w_t)^{\top}(\w_t - \wh_t^*)
\]
\end{lemma}
The lemma above follows the convexity of $f_t(\w)$. 
\begin{lemma}
Let $\wh_t^*=(1-\xi) \w_t^*$. 
\begin{align*}
&\E\left[\sum_{t=1}^T\nabla f_t(\w_t)^{\top}(\w_t - \wh_t^*)\right]\\
&\leq \E\left[\sum_{t=1}^T\g_t^{\top}(\w_t - \wh_t^*)\right] + O(1)
\end{align*}
\end{lemma}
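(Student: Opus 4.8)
The plan is to establish the inequality in expectation by conditioning on the randomness available at the start of round $t$ and exploiting the conditional unbiasedness of the two-point reservoir estimator. Let $\F_{t-1}$ denote the $\sigma$-field generated by the coordinate draws $i_1,\ldots,i_{t-1}$, so that $\w_t$, $\wh_t$, $\gh_{t-1}$ and $\wh_t^*=(1-\xi)\w_t^*$ are all $\F_{t-1}$-measurable, and write $\hat v_t=(v_{t,1},\ldots,v_{t,d})$ for the full vector of coordinate-wise central differences of $f_t$ around the query center. Since $i_t$ is uniform on $[d]$ and $\g_t=d(v_{t,i_t}-\gh_{t-1,i_t})\mathbf e_{i_t}$, one gets $\E[\g_t\mid\F_{t-1}]=\hat v_t-\gh_{t-1}$, hence by the tower rule
\[
\E\Big[\sum_{t=1}^T\g_t^{\top}(\w_t-\wh_t^*)\Big]=\E\Big[\sum_{t=1}^T(\hat v_t-\gh_{t-1})^{\top}(\w_t-\wh_t^*)\Big].
\]
It therefore suffices to show $\E\big[\sum_{t=1}^T(\nabla f_t(\w_t)-\hat v_t+\gh_{t-1})^{\top}(\w_t-\wh_t^*)\big]\le O(1)$.

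First I would dispose of the finite-difference bias $\nabla f_t(\w_t)-\hat v_t$. By $L$-smoothness of $f_t$ and a second-order expansion along each coordinate, $\|\hat v_t-\nabla f_t(\mathrm{center})\|_2\le\tfrac12 L\delta\sqrt d$; since $\w_t,\wh_t^*\in(1-\xi)\Omega$ we have $\|\w_t-\wh_t^*\|_2\le r$ by Assumption~\ref{ass:1}, and $\delta=\xi R_1=R_1/T$, so after summing over $t$ this piece contributes at most $\tfrac12 LR_1\sqrt d\,r=O(1)$. When the query center is $\wh_t$ rather than $\w_t$ there is an extra $\|\nabla f_t(\wh_t)-\nabla f_t(\w_t)\|_2\le L\|\wh_t-\w_t\|_2\le L\eta\|\gh_{t-1}\|_2$, which I would absorb using the small step $\eta\le 1/(4Ld^{3/2}\sqrt{\ln T})$ together with the coordinate-wise bound $|v_{t,i}|\le G$ from $G$-Lipschitzness; this term has the same shape as the negative displacement terms kept in the optimistic-OGD analysis and is folded there.

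The main obstacle is the hint term $\E\big[\sum_{t=1}^T\gh_{t-1}^{\top}(\w_t-\wh_t^*)\big]$. Here $\gh_{t-1}$ is precisely the predicted gradient driving the optimistic step $\wh_t=\Pi_{(1-\xi)\Omega}[\w_t-\eta\gh_{t-1}]$, and $\g_t=\gh_t-\gh_{t-1}$ is ``observed minus predicted,'' so $\w_{t+1}=\Pi_{(1-\xi)\Omega}[\w_t-\eta\g_t]$ is optimistic OGD. I would bound this term by combining the projection inequality at $\wh_t$ (comparator $\wh_t^*$) with the projection inequality at $\w_t$, and with the fact---established one round earlier by the same conditional-expectation computation---that $\gh_{t-1}$ is itself a conditionally unbiased, controlled-variance (via the reservoir construction) estimate of the previous round's coordinate gradient, so that the displacements $\|\w_t-\wh_t\|_2$ and $\|\wh_t-\w_{t+1}\|_2$ generated along the way cancel against the negative squared-displacement terms one keeps in the optimistic analysis, exactly as in \citep{DBLP:conf/colt/ChiangLL13}. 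Making this cancellation yield only an $O(1)$ remainder---rather than leaving $\gh_{t-1}$ as a stray $O(T)$ contribution or folding it into the path-variation term---is the delicate point, and is where I would reproduce the argument of \citep{DBLP:conf/colt/ChiangLL13} in detail.
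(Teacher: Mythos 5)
The paper does not actually prove this lemma---it says only that the proof ``follows the same to the proof of Lemma 5 in \citet{DBLP:conf/colt/ChiangLL13}''---so your reconstruction is being judged against that cited argument. Your skeleton is the right one: condition on the $\sigma$-field $\F_{t-1}$ under which $\w_t$, $\wh_t$, $\gh_{t-1}$ and $\wh_t^*$ are measurable, apply the tower rule, and control the finite-difference bias $\|\hat v_t-\nabla f_t(\wh_t)\|_2\le \tfrac12 L\delta\sqrt{d}$, which with $\delta=R_1/T$ contributes $O(1)$ after summing over $t$. That is precisely the content of the cited lemma.

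The genuine gap is in the hint term, which you correctly flag as the delicate point but do not actually resolve. Taking Algorithm~\ref{alg:1} literally you get $\E[\g_t\mid\F_{t-1}]=\hat v_t-\gh_{t-1}$, leaving you to show $\E\bigl[\sum_{t}\gh_{t-1}^{\top}(\w_t-\wh_t^*)\bigr]\le O(1)$. This cannot be salvaged the way you propose: the negative terms $-C_t$ retained in the optimistic analysis are squared displacements scaled by $1/\eta$, and under the step-size condition $\eta\le 1/(4d^{3/2}L\sqrt{\ln T})$ they are already fully consumed in cancelling the quadratic terms $S_t$ (compare Lemmas~\ref{lem:16} and~\ref{lem:18}); they cannot absorb a sign-indefinite \emph{linear} term that is generically $\Theta(1)$ per round and hence $\Theta(T)$ in total (indeed, under the displayed recursion $\gh_{t,i_t}=(1-d)\gh_{t-1,i_t}+dv_{t,i_t}$ the coordinates of $\gh_t$ need not even remain bounded). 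The resolution is that the displayed estimator is a mistranscription of Chiang et al.'s construction, in which $\g_t=d(v_{t,i_t}-\gh_{t-1,i_t})\mathbf e_{i_t}+\gh_{t-1}$ and $\gh_t=\gh_{t-1}+(v_{t,i_t}-\gh_{t-1,i_t})\mathbf e_{i_t}$; then $\E[\g_t\mid\F_{t-1}]=\hat v_t$ exactly, no hint term survives, and the lemma reduces to the bias bounds you already sketched. A secondary weakness: your treatment of the query-center mismatch $\nabla f_t(\w_t)-\nabla f_t(\wh_t)$ yields $L\eta\|\gh_{t-1}\|_2\,r\le L\eta\sqrt{d}Gr$ per round, i.e.\ $O(\eta T)$ in total, which is not $O(1)$ for either branch of the step size in Theorem~\ref{thm:last}; saying it is ``folded'' into the displacement terms is not enough, since absorbing a linear term into $-\tfrac{1}{2\eta}\|\w_t-\wh_t\|_2^2$ via AM--GM still leaves an additive $O(\eta L^2r^2T)$ that must be tracked (or the lemma must be stated at the query center $\wh_t$, as in the cited work).
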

The proof of  the above lemma follows the same to the proof of Lemma 5 in~\cite{DBLP:conf/colt/ChiangLL13}.
\begin{lemma}
Define
\begin{align*}
S_t & = \eta_t \|\g_t - \g_{t-1}\|_2^2\\
A_t & = \frac{1}{2\eta}\|\w_t - \wh_t^*\|_2^2 - \frac{1}{2\eta}\|\w_{t+1} - \wh_t^*\|_2^2\\
C_t & = \frac{1}{2}\|\w_{t+1} - \wh_t\|_2^2 +\frac{1}{2\eta}\|\w_t - \wh_t\|_2^2
\end{align*}
Then 
\[
\sum_{t=1}^T\g_t^{\top}(\w_t - \wh_t^*)\leq \sum_{t=1}^T S_t  + \sum_{t=1}^TA_t - \sum_{t=1}^TC_t
\]
\end{lemma}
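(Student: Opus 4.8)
The plan is to read the two interleaved updates of Algorithm~\ref{alg:1} as an optimistic (Mirror--Prox) online gradient pair over the set $(1-\xi)\Omega$, namely $\w_{t+1}=\Pi_{(1-\xi)\Omega}[\w_t-\eta\g_t]$ and (shifting the index) $\wh_t=\Pi_{(1-\xi)\Omega}[\w_t-\eta\gh_{t-1}]$, and to obtain the claimed per-step inequality from the geometry of these two projections before summing over $t$. The only tools needed are: the variational characterization of the Euclidean projection, $\langle y-\Pi_{(1-\xi)\Omega}[y],\,z-\Pi_{(1-\xi)\Omega}[y]\rangle\le 0$ for every $z\in(1-\xi)\Omega$; the three-point identity $\langle a-b,\,b-c\rangle=\tfrac12(\|a-c\|_2^2-\|a-b\|_2^2-\|b-c\|_2^2)$; and Young's inequality. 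Note that $\wh_t^*=(1-\xi)\w_t^*\in(1-\xi)\Omega$, so it is an admissible comparator for both projection inequalities, and that $\g_t$ is the surrogate gradient generated at the played (hint-based) iterate, so the quantity to bound is the linearized regret $\langle\g_t,\wh_t-\wh_t^*\rangle$, which is the left-hand side once the two interleaved iterates are identified.

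First I would split $\langle\g_t,\wh_t-\wh_t^*\rangle=\langle\g_t,\w_{t+1}-\wh_t^*\rangle+\langle\g_t,\wh_t-\w_{t+1}\rangle$. For the first piece I apply the projection inequality of the $\g_t$-step with $z=\wh_t^*$ and invoke the three-point identity; this produces exactly the telescoping term $A_t=\tfrac{1}{2\eta}(\|\w_t-\wh_t^*\|_2^2-\|\w_{t+1}-\wh_t^*\|_2^2)$ together with a spare negative term $-\tfrac{1}{2\eta}\|\w_t-\w_{t+1}\|_2^2$.

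Second, for the piece $\langle\g_t,\wh_t-\w_{t+1}\rangle$ I would write $\g_t=(\g_t-\gh_{t-1})+\gh_{t-1}$. The hint part $\langle\gh_{t-1},\wh_t-\w_{t+1}\rangle$ is controlled by the projection inequality of the $\gh_{t-1}$-step with $z=\w_{t+1}$; the three-point identity then contributes $+\tfrac{1}{2\eta}\|\w_t-\w_{t+1}\|_2^2$, which cancels the spare term from the first piece, together with the two negative stability terms $-\tfrac{1}{2\eta}\|\w_t-\wh_t\|_2^2-\tfrac{1}{2\eta}\|\wh_t-\w_{t+1}\|_2^2$. The residual part $\langle\g_t-\gh_{t-1},\wh_t-\w_{t+1}\rangle$ is bounded by Young's inequality into $\eta\|\g_t-\gh_{t-1}\|_2^2$ — which is $S_t$, after identifying the running hint $\gh_{t-1}$ produced in line~6 of Algorithm~\ref{alg:1} with the previous surrogate gradient $\g_{t-1}$ — plus a small multiple of $\|\wh_t-\w_{t+1}\|_2^2$ that is reabsorbed by the negative stability term, leaving $-C_t$. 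Summing over $t=1,\dots,T$ yields the claim.

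The main obstacle is the bookkeeping across the two interleaved projections: one must choose the comparator $z$ in each projection inequality so that the $\|\w_t-\w_{t+1}\|_2^2$ terms from the two steps cancel exactly, and so that the Young split leaves a \emph{strictly negative} residual coefficient on $\|\wh_t-\w_{t+1}\|_2^2$, which pins down the admissible Young constant (e.g.\ $\tfrac{1}{4\eta}$) and hence the precise coefficients in $C_t$. A secondary point is the justification that $S_t$ takes the gradient-variation form $\eta\|\g_t-\g_{t-1}\|_2^2$: this holds only because the hint $\gh_{t-1}$ plays the role of the previous surrogate gradient, so the residual is genuinely a difference of consecutive surrogate gradients rather than an uncontrolled gradient magnitude.
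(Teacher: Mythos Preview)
Your approach is correct and is precisely the optimistic mirror-descent two-projection argument underlying Lemma~4 of \citet{DBLP:conf/colt/ChiangLL13}, which is all the paper invokes here. The only loose ends are the two ``identifications'' you flag --- $\w_t$ versus $\wh_t$ on the left-hand side, and $\gh_{t-1}$ versus $\g_{t-1}$ in $S_t$ --- which are not actual equalities under Algorithm~\ref{alg:1} (in particular $\gh_t=\g_t+\gh_{t-1}$ from line~6, so $\gh_{t-1}\neq\g_{t-1}$) but rather notational slips in the paper's printed statement; your analysis correctly produces the versions with $\wh_t$ and $\gh_{t-1}$, which are the ones that genuinely hold and that the downstream lemmas use.
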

The above lemma is a result of the Lemma 4 in~\cite{DBLP:conf/colt/ChiangLL13}. 
Combining the above lemmas, we have
\begin{thm}
\begin{align*}
&\E\left[\sum_{i=1}^T\frac{1}{2}(f_t(\wh_t^1) + f_t(\wh^2_t)) - \sum_{t=1}^T f_t(\w_t^*)\right]\\
&\leq \E\left[\sum_{t=1}^T S_t  + \sum_{t=1}^TA_t - \sum_{t=1}^TC_t\right] + O(1)
\end{align*}
\end{thm}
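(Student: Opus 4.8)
The plan is to chain the four preceding lemmas into a single telescoping ladder, collecting the lower-order error terms into $O(1)$ at the end. Throughout I set $\wh_t^* = (1-\xi)\w_t^*$, so that $f_t((1-\xi)\w_t^*)$ and $f_t(\wh_t^*)$ are literally the same quantity and the lemmas compose without any mismatch in the comparator.

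First I would invoke Lemma~\ref{lem:11} to replace the sampled two-point losses $\tfrac{1}{2}(f_t(\wh_t^1)+f_t(\wh_t^2))$ by the anchor losses $f_t(\w_t)$, and the comparator $\w_t^*$ by its shrunk version $\wh_t^*$. The smoothing and shrinkage penalties there are $3TG\delta + TGR_2\xi$, which under the prescribed $\xi = 1/T$ and $\delta = \xi R_1$ collapse to a constant, i.e.\ $O(1)$. This leaves the left-hand side bounded by $\sum_t\bigl(f_t(\w_t) - f_t(\wh_t^*)\bigr) + O(1)$.

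Next I would apply the convexity lemma, which holds deterministically and pointwise, to linearize $\sum_t\bigl(f_t(\w_t) - f_t(\wh_t^*)\bigr) \le \sum_t \nabla f_t(\w_t)^\top(\w_t - \wh_t^*)$. The crucial step is then the transition from the true gradients $\nabla f_t(\w_t)$ to the bandit surrogate gradients $\g_t$: I would first take the total expectation $\E[\cdot]$ and only then invoke the gradient-bias lemma (the analogue of Lemma~5 in Chiang et al.), which yields $\E[\sum_t \nabla f_t(\w_t)^\top(\w_t - \wh_t^*)] \le \E[\sum_t \g_t^\top(\w_t - \wh_t^*)] + O(1)$. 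Finally, the regret-decomposition lemma (built on Lemma~4 of Chiang et al.) bounds the deterministic quantity $\sum_t \g_t^\top(\w_t - \wh_t^*)$ by $\sum_t S_t + \sum_t A_t - \sum_t C_t$; substituting this under the expectation and adding all accumulated $O(1)$ residuals gives the claimed inequality.

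The main obstacle is purely one of bookkeeping and ordering rather than of estimation: the first two lemmas are pointwise (sample-path) inequalities, whereas the gradient-bias lemma holds only in expectation, because $\g_t$ is an unbiased estimator of the gradient of the smoothed function $\hat f_t$ and not of $\nabla f_t(\w_t)$. One must therefore apply $\E[\cdot]$ exactly once, immediately before the gradient-bias step, so that the pointwise and in-expectation inequalities compose legitimately; applying it earlier or later would break the chain. A secondary point to verify is that each $\xi$- and $\delta$-dependent residual really is $O(1)$ under $\xi=1/T$, $\delta=\xi R_1$, which is immediate from the bounds $\delta \le R_1/T$ and $\xi R_2 \le R_2/T$.
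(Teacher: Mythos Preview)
Your proposal is correct and follows exactly the paper's approach: the paper's proof is literally the one line ``Combining the above lemmas, we have,'' and you have correctly identified the four lemmas to chain and the order in which to apply them, including the subtlety of taking the expectation just before invoking the gradient-bias lemma. The only quibble is that the explicit residual in Lemma~\ref{lem:11} is $G\delta T + \xi G R_2 T$ rather than $3TG\delta + TGR_2\xi$ (you may be thinking of Lemma~\ref{lem:10}), but both are $O(1)$ under $\xi=1/T$, $\delta=\xi R_1$, so this is immaterial.
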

To proceed we bound the three summation terms in the R.H.S.. 
\begin{lemma}\label{lem:16}
\[
\sum_{t=1}^TC_t \geq \frac{1}{4\eta}\E\left[\sum_{t=1}^T\|\wh_t - \wh_{t+1}\|_2^2\right] - O(1)
\]
\end{lemma}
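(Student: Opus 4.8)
## Proof plan for Lemma~\ref{lem:16}

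The plan is to lower-bound $\sum_{t=1}^T C_t$ directly from the definition
$C_t = \frac{1}{2}\|\w_{t+1} - \wh_t\|_2^2 + \frac{1}{2\eta}\|\w_t - \wh_t\|_2^2$,
by showing the two quadratic terms it contains already dominate $\frac{1}{4\eta}\|\wh_t - \wh_{t+1}\|_2^2$ up to a constant. First I would recall the update rule in Algorithm~\ref{alg:1}: $\wh_{t+1} = \Pi_{(1-\xi)\Omega}[\w_{t+1} - \eta\gh_t]$, so that $\wh_{t+1}$ is a projection of a point close to $\w_{t+1}$; and that both $\w_{t+1}$ and $\wh_t$ lie in $(1-\xi)\Omega$, which is bounded (contained in $R_2\mathbb B$). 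The key geometric idea is a triangle-inequality decomposition
\[
\|\wh_t - \wh_{t+1}\|_2 \leq \|\wh_t - \w_{t+1}\|_2 + \|\w_{t+1} - \wh_{t+1}\|_2,
\]
where the first term is exactly the one appearing in $C_t$ (through $\frac12\|\w_{t+1}-\wh_t\|_2^2$), and the second term is controlled by the nonexpansiveness of the projection: $\|\w_{t+1} - \wh_{t+1}\|_2 = \|\w_{t+1} - \Pi_{(1-\xi)\Omega}[\w_{t+1} - \eta\gh_t]\|_2 \leq \eta\|\gh_t\|_2$.

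The second step is to bound $\|\gh_t\|_2$. Since $\gh_t = \sum_{\tau\le t} d(v_{\tau,i_\tau} - \gh_{\tau-1,i_\tau})\mathbf e_{i_\tau}$ telescopes, $\gh_t$ is the accumulated estimate; but more directly, in the META analysis $\gh_t$ is (an estimate of) $\nabla f_t(\wh_t)$ plus a bounded-variance noise, and by $G$-Lipschitzness (implied by $L$-smoothness on the bounded domain) one gets $\|\gh_t\|_2 \le O(d)$ deterministically or $\E\|\gh_t\|_2^2 \le O(d^2)$. With the chosen step size $\eta = \min\!\big(\sqrt{(2rB_T+r^2)/(8S_T d^4)},\, 1/(4Ld^{3/2}\sqrt{\ln T})\big)$, the term $\eta^2\|\gh_t\|_2^2$ summed over $t$ is $O(1)$ (each summand is $O(\eta^2 d^2) \le O(1/(L^2 d\ln T))$, and summing $T$ of them still needs care — this is where I would instead use that $\eta^2\|\gh_t\|_2^2 \le \eta \cdot (\eta\|\gh_t\|_2^2)$ and compare against $S_t$-type quantities, or absorb it into $\frac{1}{2\eta}\|\w_t-\wh_t\|_2^2$ which is itself at most $\frac{\eta}{2}\|\gh_{t-1}\|_2^2$ by the previous update). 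Concretely, $(a+b)^2 \le 2a^2 + 2b^2$ gives
\[
\frac{1}{4\eta}\|\wh_t - \wh_{t+1}\|_2^2 \leq \frac{1}{2\eta}\|\wh_t - \w_{t+1}\|_2^2 + \frac{1}{2\eta}\|\w_{t+1} - \wh_{t+1}\|_2^2 \leq \frac{1}{2\eta}\|\wh_t - \w_{t+1}\|_2^2 + \frac{\eta}{2}\|\gh_t\|_2^2.
\]
Comparing with $C_t$, the first term on the right is at most $\frac{1}{\eta}$ times the $\frac12\|\w_{t+1}-\wh_t\|_2^2$ piece of $C_t$ — so I should be more careful about the $1/\eta$ versus $1$ weighting: actually I would use $\frac{1}{2\eta}\|\w_t - \wh_t\|_2^2$ from $C_t$ and the relation $\wh_t = \Pi[\w_t - \eta\gh_{t-1}]$ to write $\|\w_t-\wh_t\|_2 \le \eta\|\gh_{t-1}\|_2$, and likewise recognize that $\frac{1}{2\eta}\|\wh_t - \w_{t+1}\|_2^2 \le \frac{1}{\eta}\|\wh_t-\w_t\|_2^2 + \frac{1}{\eta}\|\w_t - \w_{t+1}\|_2^2$, each again $O(\eta\|\gh\|^2)$ or $O(\eta\|\g\|^2)$.

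The main obstacle I anticipate is the bookkeeping of the constant factors: $C_t$ carries one term with weight $\frac12$ (not $\frac{1}{2\eta}$) and one with weight $\frac{1}{2\eta}$, so one cannot naively match it termwise against $\frac{1}{4\eta}\|\wh_t-\wh_{t+1}\|_2^2$; the resolution is that $\|\w_{t+1}-\wh_t\|_2$ is small — of order $\eta\|\gh_t\|_2$ — so the $\frac12$-weighted term is actually comparable to an $\frac{1}{2\eta}$-weighted term of a smaller quantity, and the gap is swallowed into $O(1)$ using boundedness of $\|\gh_t\|_2 = O(d)$ and the specific $\eta$. I would carry this out by (i) using nonexpansiveness to get $\|\w_{t+1}-\wh_t\|_2 \le \eta\|\gh_{t-1}\|_2 + \eta\|\g_t\|_2 \le O(\eta d)$ hence $\frac12\|\w_{t+1}-\wh_t\|_2^2 = \frac{1}{2\eta}\cdot\eta\|\w_{t+1}-\wh_t\|_2^2 \ge \frac{1}{2\eta}\|\w_{t+1}-\wh_t\|_2^2 - O(1)$ once $\eta\le 1$... — i.e. when $\eta \le 1$, $\frac12 x^2 \ge \frac{1}{2\eta}x^2$ fails, so instead I note $\frac{1}{4\eta}\|\wh_t-\wh_{t+1}\|^2 - \frac12\|\w_{t+1}-\wh_t\|^2 - \frac{1}{2\eta}\|\w_t-\wh_t\|^2 \le \big(\frac{1}{2\eta}-\frac12\big)\|\w_{t+1}-\wh_t\|^2 + \frac{\eta}{2}\|\gh_t\|^2 - \frac{1}{2\eta}\|\w_t-\wh_t\|^2$, and since $\|\w_{t+1}-\wh_t\|^2 = O(\eta^2 d^2)$ the first term is $O(\eta d^2)$, which summed over $T$ with $\eta = O(1/(Ld^{3/2}\sqrt{\ln T}))$ gives $O(T\eta d^2)$ — not $O(1)$ unless one also uses $\eta \le \sqrt{B_T/(S_T d^4)}$ and $S_T \ge \Omega(T)$ generically. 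The honest fix, which I expect the paper uses, is to bound $\sum_t \eta^2\|\gh_t\|_2^2 \le \sum_t \eta^2 \cdot O(d^2)$ against the telescoping and variation terms rather than claiming it is $O(1)$ outright; so in the final write-up I would state $\|\gh_t\|_2 \le 2Gd$ (as in~\cite{DBLP:conf/colt/ChiangLL13,DBLP:conf/colt/AgarwalDX10}), invoke $\|\w_t - \wh_t\|_2 \le \eta \|\gh_{t-1}\|_2 \le 2\eta Gd$ and $\|\w_{t+1}-\wh_t\|_2 \le \|\w_{t+1}-\wh_{t+1}\|_2 + \|\wh_{t+1}-\wh_t\|_2$, and conclude the stated bound with the $O(1)$ absorbing the two endpoint/boundary contributions (orders $t=1$ and the $\frac{\eta}{2}\|\gh_t\|^2$ sum bounded via $\eta \le 1/(4Ld^{3/2}\sqrt{\ln T})$ making $\eta^2 d^2 \le O(1/(L^2 d \ln T))$ and then $T$ copies bounded by the same reasoning as in the META static-regret proof).
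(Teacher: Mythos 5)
Your opening move---the split $\|\wh_t-\wh_{t+1}\|_2 \le \|\wh_t-\w_{t+1}\|_2 + \|\w_{t+1}-\wh_{t+1}\|_2$, hence $\frac{1}{4\eta}\|\wh_t-\wh_{t+1}\|_2^2 \le \frac{1}{2\eta}\|\w_{t+1}-\wh_t\|_2^2 + \frac{1}{2\eta}\|\w_{t+1}-\wh_{t+1}\|_2^2$---is the right one, but you never close the argument, and you say so yourself. The missing idea is an \emph{index shift}, not a per-step norm bound: after summing over $t$, the second sum $\sum_{t=1}^T\frac{1}{2\eta}\|\w_{t+1}-\wh_{t+1}\|_2^2$ is exactly $\sum_{t=1}^T\frac{1}{2\eta}\|\w_{t}-\wh_{t}\|_2^2$ shifted by one index, and since $\w_1=\wh_1=0$ the two sums differ only by the single boundary term $\frac{1}{2\eta}\|\w_{T+1}-\wh_{T+1}\|_2^2\le \frac{\eta}{2}\|\gh_T\|_2^2=O(1)$; the shifted sum is already sitting inside $\sum_t C_t$. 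This is precisely the content of Lemma 11 of Chiang et al., which is all the paper invokes here. Your route instead bounds \emph{every} term $\frac{1}{2\eta}\|\w_{t+1}-\wh_{t+1}\|_2^2$ by $\frac{\eta}{2}\|\gh_t\|_2^2$ and sums $T$ copies, which, as you yourself compute, leaves an $O(T\eta d^2)$ residue that is not $O(1)$; the proposed repairs (``compare against $S_t$-type quantities'', ``absorb into the variation terms'') are left as hopes rather than steps. An argument that ends with ``the honest fix, which I expect the paper uses, is~\ldots'' has not established the lemma.

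On the weight mismatch that derails you: in the source lemma, and in the chain of inequalities this lemma is meant to serve, both squared distances in $C_t$ carry the weight $\frac{1}{2\eta}$; the bare $\frac{1}{2}$ on $\|\w_{t+1}-\wh_t\|_2^2$ in the paper's display appears to be a typo. Read with $\frac{1}{2\eta}$ on both terms, the index-shift argument above proves the claim in two lines, with no need to bound $\|\w_{t+1}-\wh_t\|_2$ or $\|\gh_t\|_2$ anywhere except in the one boundary term. Taking the $\frac{1}{2}$ at face value and trying to compensate via $\|\w_{t+1}-\wh_t\|_2=O(\eta d)$ is exactly what forces you into the unclosable $(\frac{1}{2\eta}-\frac12)\sum_t\|\w_{t+1}-\wh_t\|_2^2=O(T\eta d^2)$ gap; flagging the coefficient as inconsistent with the cited lemma would have been the better move.
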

This is the same to the Lemma 11 in~\cite{DBLP:conf/colt/ChiangLL13}.
\begin{lemma}\label{lem:17}
\begin{align*}
&\sum_{t=1}^T A_t\leq \frac{\|\w_1 - \wh_1^*\|_2^2}{2\eta} \\
&+ \frac{1}{\eta}\sum_{t=1}^T\|\w_{t+1} - \wh_{t+1}^*\|_2 \|\wh^*_t -\wh_{t+1}^*\|_2\\
&\leq \frac{1}{2\eta}(r^2 + 2rV^p_T)
\end{align*}
\end{lemma}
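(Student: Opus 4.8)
\textbf{Proof of Lemma~\ref{lem:17} (plan).} The plan is the standard online-gradient-descent telescoping argument with a drifting comparator, the same manipulation used inside the proof of Lemma~\ref{lem:1}. The quantity
\[
A_t=\frac{1}{2\eta}\|\w_t-\wh_t^*\|_2^2-\frac{1}{2\eta}\|\w_{t+1}-\wh_t^*\|_2^2
\]
does not telescope as it stands because the subtracted term is anchored at $\wh_t^*$ rather than $\wh_{t+1}^*$. First I would expand $\|\w_{t+1}-\wh_t^*\|_2^2=\|(\w_{t+1}-\wh_{t+1}^*)+(\wh_{t+1}^*-\wh_t^*)\|_2^2$ to obtain
\begin{align*}
A_t&=\frac{1}{2\eta}\|\w_t-\wh_t^*\|_2^2-\frac{1}{2\eta}\|\w_{t+1}-\wh_{t+1}^*\|_2^2\\
&\quad-\frac{1}{\eta}(\w_{t+1}-\wh_{t+1}^*)^\top(\wh_{t+1}^*-\wh_t^*)-\frac{1}{2\eta}\|\wh_{t+1}^*-\wh_t^*\|_2^2 .
\end{align*}

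Next I would sum over $t=1,\dots,T$ (with the convention $\wh_{T+1}^*=\wh_T^*$): the first two terms telescope, leaving $\frac{1}{2\eta}\|\w_1-\wh_1^*\|_2^2-\frac{1}{2\eta}\|\w_{T+1}-\wh_{T+1}^*\|_2^2\le\frac{1}{2\eta}\|\w_1-\wh_1^*\|_2^2$; the term $-\frac{1}{2\eta}\|\wh_{t+1}^*-\wh_t^*\|_2^2$ is nonpositive and I would simply drop it; and Cauchy--Schwarz bounds the inner-product term by $\frac{1}{\eta}\|\w_{t+1}-\wh_{t+1}^*\|_2\,\|\wh_t^*-\wh_{t+1}^*\|_2$. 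This gives exactly the first inequality in the lemma.

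For the second inequality I would bound the two factors separately, reusing the steps from the proof of Theorem~\ref{thm:8}. Since $0\in\Omega$ and $\Omega$ is convex we have $(1-\xi)\Omega\subseteq\Omega$, so $\w_{t+1}\in(1-\xi)\Omega\subseteq\Omega$ and $\wh_{t+1}^*=(1-\xi)\w_{t+1}^*\in(1-\xi)\Omega\subseteq\Omega$; Assumption~\ref{ass:1} then yields $\|\w_{t+1}-\wh_{t+1}^*\|_2\le r$ (up to a lower-order $O(\xi R_2)$ slack that the proof of Theorem~\ref{thm:8} likewise absorbs), and similarly $\|\w_1-\wh_1^*\|_2\le r$. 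Moreover $\|\wh_t^*-\wh_{t+1}^*\|_2=(1-\xi)\|\w_t^*-\w_{t+1}^*\|_2\le\|\w_t^*-\w_{t+1}^*\|_2$. Plugging these in and using $\sum_{t=1}^{T}\|\w_t^*-\w_{t+1}^*\|_2\le V^p_T$ (the definition of the path variation) gives $\sum_{t=1}^T A_t\le\frac{r^2}{2\eta}+\frac{r}{\eta}V^p_T=\frac{1}{2\eta}(r^2+2rV^p_T)$, as claimed.

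The argument is essentially routine; the only places that need care are the index bookkeeping when completing the square --- so that the squared-distance terms genuinely telescope and the leftover $\|\wh_{t+1}^*-\wh_t^*\|_2^2$ term comes out with the correct nonpositive sign --- and the mild point that $\wh_{t+1}^*$ is a contracted minimizer, not an element of $\Omega_{t+1}^*$, so the bound $\|\w_{t+1}-\wh_{t+1}^*\|_2\le r$ is obtained via $(1-\xi)\Omega\subseteq\Omega$ exactly as in Theorem~\ref{thm:8} rather than by a direct application of Assumption~\ref{ass:1}.
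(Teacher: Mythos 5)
Your proposal is correct and follows essentially the same route as the paper, which simply points to the proof of Lemma~\ref{lem:1}: the same completion-of-square of $\|\w_{t+1}-\wh_t^*\|_2^2$ around $\wh_{t+1}^*$, telescoping, dropping the nonpositive $-\frac{1}{2\eta}\|\wh_{t+1}^*-\wh_t^*\|_2^2$ term, Cauchy--Schwarz on the cross term, and then the bounds $\|\w_{t+1}-\wh_{t+1}^*\|_2\le r$ and $\|\wh_t^*-\wh_{t+1}^*\|_2\le(1-\xi)\|\w_t^*-\w_{t+1}^*\|_2$ exactly as in the proof of Theorem~\ref{thm:8}. Your remark that $\wh_{t+1}^*$ is a contracted minimizer rather than a member of $\Omega_{t+1}^*$, so that the bound by $r$ carries an $O(\xi R_2)$ slack the paper silently absorbs, is a fair and slightly more careful reading than the paper's own.
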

The proof follows similarly to that of Lemma 1. 
\begin{lemma}\label{lem:18}
\[ 
\sum_{t=1}^TS_t \leq 4\eta d^4V^g_T + 4\eta d^3L^2\ln T\E\left[\sum_{t=1}^T\|\wh_t - \wh_{t+1}\|_2^2\right]  + O(1)
\]
where $L$ is the smoothness parameter. 
\end{lemma}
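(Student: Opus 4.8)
The plan is to bound the expectation $\E\left[\sum_{t=1}^T\|\g_t - \g_{t-1}\|_2^2\right]$, since with the constant step size we have $\eta_t=\eta$ and hence $S_t = \eta\|\g_t-\g_{t-1}\|_2^2$; multiplying the resulting bound by $\eta$ and folding all $T$-independent constants into $O(1)$ then yields the claim. First I would unfold $\g_t = d(v_{t,i_t} - \gh_{t-1,i_t})\mathbf e_{i_t}$ using the finite-difference estimator $v_{t,i_t} = \frac{1}{2\delta}(f_t(\wh_t+\delta\mathbf e_{i_t}) - f_t(\wh_t-\delta\mathbf e_{i_t}))$, and take conditional expectation over the random coordinate $i_t$ (and over $i_{t-1}$) given the history. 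Because a single coordinate is sampled and rescaled by $d$, each second-moment computation contributes a factor of $d$ per estimator, and this is the source of the $d^3$--$d^4$ dimension dependence once the squared difference of two such estimators is expanded. The basic fact I would use is $\E_{i_t}[\g_t] = \nabla\hat f_t(\wh_t) - \gh_{t-1}$ up to a finite-difference bias, where $\hat f_t(\w)=\E_\u[f_t(\w+\delta\u)]$.

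The heart of the argument is a three-way decomposition of the change $\g_t-\g_{t-1}$ (at the level of the prediction-corrected gradients they estimate) into: (i) the change in the loss gradient, $\nabla f_t(\cdot)-\nabla f_{t-1}(\cdot)$, which is controlled by the gradient variation and summed to $V^g_T$; (ii) the change caused by moving the evaluation point from $\wh_{t-1}$ to $\wh_t$, which by $L$-smoothness is bounded by $L\|\wh_t-\wh_{t-1}\|_2$ and hence, after reindexing $\sum_t\|\wh_t-\wh_{t-1}\|_2^2 = \sum_t\|\wh_t-\wh_{t+1}\|_2^2$, contributes the term $L^2\,\E\big[\sum_t\|\wh_t-\wh_{t+1}\|_2^2\big]$; and (iii) the finite-difference / smoothing bias, which is $O(L\delta)$ per coordinate and, with $\delta=\xi R_1 = R_1/T$, sums to a negligible $O(1)$ quantity. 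Squaring and applying $\|a+b+c\|_2^2 \le 3(\|a\|_2^2+\|b\|_2^2+\|c\|_2^2)$ (or a weighted version yielding the constant $4$) separates these contributions cleanly, with the $d$-powers carried through from the rescaling and coordinate-sampling second moments.

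Next I would account for the $\ln T$ factor, which is not present in the raw smoothness/variation split. It enters when controlling the accumulated variance of the prediction-corrected estimator $\g_t = d(v_{t,i_t}-\gh_{t-1,i_t})\mathbf e_{i_t}$: bounding how well the running prediction $\gh_{t-1}$ tracks the true coordinate gradients over the horizon requires summing a sequence whose partial sums grow like $\ln T$, exactly as in the corresponding estimate of Chiang--Lee--Lu, which is where the coefficient $4\eta d^3 L^2\ln T$ originates. Finally I would sum over $t=1,\dots,T$, multiply through by $\eta$, and absorb the vanishing bias terms and all horizon-independent constants into $O(1)$.

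\textbf{Main obstacle:} The delicate point is the interaction of the single-coordinate random sampling with the prediction term $\gh_{t-1}$ sitting inside the estimator. Since $i_t$ and $i_{t-1}$ may differ, $\g_t$ and $\g_{t-1}$ are supported on different canonical directions, so the cross terms in $\|\g_t-\g_{t-1}\|_2^2$ must be handled by conditioning carefully on the history and exploiting that $\E_{i_t}[\g_t]$ equals the full prediction-corrected smoothed gradient. Getting the powers of $d$ and the $\ln T$ factor \emph{exactly} right, rather than only up to an order bound, is the difficult part, and it is here that I would lean most heavily on adapting the corresponding variance estimates from Chiang--Lee--Lu with the modified (constant) step size.
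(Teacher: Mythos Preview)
Your proposal is correct and matches the paper's approach: the paper's entire proof of this lemma is the one-line citation ``The lemma follows the Lemma~12 in \cite{DBLP:conf/colt/ChiangLL13},'' and your sketch is a faithful reconstruction of that variance analysis, including the three-way decomposition into gradient variation, smoothness-controlled point movement, and finite-difference bias, as well as the correct source of the $d^3$--$d^4$ and $\ln T$ factors. One minor slip worth noting is that under coordinate sampling with $\mathbf e_{i_t}$ the central difference $v_{t,i}$ targets $\partial_i f_t(\wh_t)$ (up to an $O(L\delta)$ error) rather than a component of the sphere-smoothed gradient $\nabla\hat f_t$, but this does not affect the argument.
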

The lemma follows the Lemma 12 in~\cite{DBLP:conf/colt/ChiangLL13}.
Combining Lemma~\ref{lem:16}, \ref{lem:17} and Lemma~\ref{lem:18}, we have

\begin{align*}
 \E&\left[\sum_{t=1}^T S_t  + \sum_{t=1}^TA_t - \sum_{t=1}^TC_t\right] \leq 4\eta d^4V^g_T + \frac{1}{2\eta}(r^2 + 2rV^p_T) \\
 &+ 4\eta d^3L^2\ln T\E\left[\sum_{t=1}^T\|\wh_t - \wh_{t+1}\|_2^2\right] \\
& -  \frac{1}{4\eta}\E\left[\sum_{t=1}^T\|\wh_t - \wh_{t+1}\|_2^2\right] + O(1)
\end{align*}
Since $\eta\leq \frac{1}{4d^{3/2}L\sqrt{\ln T}} $, then 
\begin{align*}
 &\E\left[\sum_{t=1}^T S_t  + \sum_{t=1}^TA_t - \sum_{t=1}^TC_t\right]\\
 & \leq 4\eta d^4V^g_T + \frac{1}{2\eta}(r^2 + 2rV^p_T)  + O(1)\\
 & \leq 4\eta d^4S_T + \frac{1}{2\eta}(r^2 + 2rB_T)  + O(1)
\end{align*}
Then by the value of $\eta$, we have
\begin{align*}
 &\E\left[\sum_{t=1}^T S_t  + \sum_{t=1}^TA_t - \sum_{t=1}^TC_t\right]\\
 & \leq 4\eta d^4S_T + \frac{1}{2\eta}(r^2 + 2rB_T)  + O(1)\\
 &\leq \max\left\{2\sqrt{2}\sqrt{d^4S_T(r^2 + 2rB_T)}, 4d^{3/2}L\sqrt{\ln T}(r^2 + 2rB_T)\right\} \\
 &+ O(1)
\end{align*}
Therefore, 
\begin{align*}
&\E\left[\sum_{i=1}^T\frac{1}{2}(f_t(\wh_t^1) + f_t(\wh^t_1)) - \sum_{t=1}^T f_t(\w_t^*)\right]\\
&\leq O\left(\max(d^2\sqrt{S_T\max(1, B_T)}, d^{3/2}\max(1, B_T))\right)
\end{align*}

\subsection{Proof of Lemma~\ref{lem:11}}
From the Lipschitz property, 
\begin{align*}
f_t(\wh_t^1) - f_t(\w_t)&\leq G\|\wh_t^1 - \w_t\|_2\leq G\delta\\
f_t(\wh_t^2) - f_t(\w_t)&\leq G\|\wh_t^2- \w_t\|_2\leq G\delta
\end{align*}
Then 
\begin{align*}
\sum_{i=1}^T\frac{1}{2}(f_t(\wh_t^1) + f_t(\wh^t_1)) - \sum_{t=1}^T f_t(\w_t)\leq G\delta T
\end{align*}
To proceed, 
\begin{align*}
&f_t((1-\xi)\w_t^*)\leq \xi f_t(0) + (1-\xi)f_t(\w_t^*) \\
&= f_t(\w_t^*) + \xi(f_t(0) - f_t(\w_t^*))\leq f_t(\w_t^*) + \xi G\|\w_t^*\|_2\\
& = f_t(\w_t^*) + \xi GR_2
\end{align*}
Then 
\begin{align*}
\sum_{t=1}^T f_t((1-\xi)\w_t^* ) - \sum_{t=1}^Tf_t(\w_t^*)\leq \xi GR_2 T
\end{align*}
Thus
\begin{align*}
&\sum_{i=1}^T\frac{1}{2}(f_t(\wh_t^1) + f_t(\wh^t_1)) - \sum_{t=1}^T f_t(\w_t^*)\\
&\leq \sum_{t=1}^Tf_t(\w_t) - \sum_{t=1}^Tf_t((1-\xi)\w_t^*) + G\delta T + \xi GR_2 T\\
&\leq  \sum_{t=1}^Tf_t(\w_t) - \sum_{t=1}^Tf_t((1-\xi)\w_t^*)  + O(1)
\end{align*}

\end{document}